\documentclass{article}

\usepackage[preprint]{colm2026_conference}

\usepackage[utf8]{inputenc}
\usepackage[T1]{fontenc}
\usepackage{hyperref}
\usepackage{url}
\usepackage{booktabs}
\usepackage{amsfonts}
\usepackage{amsmath}
\usepackage{amssymb}
\usepackage{amsthm}
\usepackage{nicefrac}
\usepackage{microtype}
\usepackage{xcolor}
\usepackage{hyperref}
\usepackage{wrapfig}
\usepackage[pdftex]{graphicx}
\usepackage{float}
\usepackage{subcaption}
\usepackage{fontawesome5}
\usepackage{cleveref} %
\graphicspath{{./}}
\usepackage{pifont}
\setcitestyle{authoryear,round}
\usepackage{etoc}

\usepackage{amsmath,amsfonts,bm}

\def\eqref#1{equation~\ref{#1}}

\def\1{\bm{1}}

\DeclareMathAlphabet{\mathsfit}{\encodingdefault}{\sfdefault}{m}{sl}
\SetMathAlphabet{\mathsfit}{bold}{\encodingdefault}{\sfdefault}{bx}{n}

\usepackage[most]{tcolorbox}
\usepackage{fvextra}

\DefineVerbatimEnvironment{PromptVerbatim}{Verbatim}{
  fontsize=\scriptsize,
  breaklines=true,
  breakanywhere=true
}

\newtcolorbox{promptframe}[1]{
  breakable,
  colback=blue!5,         %
  colframe=blue!25,       %
  colbacktitle=blue!25,
  coltitle=black,
  boxrule=0.5pt,
  arc=1pt,
  left=6pt,
  right=6pt,
  top=6pt,
  bottom=6pt,
  title={#1},
  fonttitle=\bfseries
}

\definecolor{cornflowerblue}{rgb}{0.39, 0.58, 0.93} %
\hypersetup{
    colorlinks=true,
    linkcolor=cornflowerblue,
    filecolor=magenta,      
    urlcolor=teal,
    citecolor=cornflowerblue,%
    }

\def\futuresim{{\texttt{FutureSim}}}

\newtheorem{theorem}{Theorem}[section]

\title{\textsc{FutureSim}: Replaying World Events to Evaluate Adaptive Agents}

\author{%
  \normalfont
	  Shashwat Goel$^{1,2}$\thanks{Equal contribution.} \quad
	  Nikhil Chandak$^{2,4*}$ \quad
	  Arvindh Arun$^{3*}$ \\
	  Ameya Prabhu$^{4,5}$ \quad
	  Steffen Staab$^{3,6}$ \\
	  Moritz Hardt$^{2,4}$ \quad
	  Maksym Andriushchenko$^{1,2}$ \quad
	  Jonas Geiping$^{1,2,4}$ \\
	  \vspace{0.35em}\\
	  $^1$ELLIS Institute T\"ubingen \quad
	  $^2$Max Planck Institute for Intelligent Systems \\
  $^3$Institute for AI, University of Stuttgart \quad
  $^4$T\"ubingen AI Center \\
  $^5$University of T\"ubingen \quad
  $^6$University of Southampton
}

\begin{document}

\maketitle

\vspace{-0.75cm}
\begin{center}
    \begin{tabular}{c@{\hskip 9pt}c}
    \hspace*{0.2cm}\raisebox{-1pt}{\faGlobe} \href{https://openforecaster.github.io/futuresim}{\fontsize{10pt}{0pt}\texttt{Blog}}
    \hspace*{1.4cm}\raisebox{-1pt}{\faGithub} \href{https://github.com/OpenForecaster/futuresim/tree/main}{\fontsize{10pt}{0pt}\texttt{Code}}
\end{tabular}
\end{center}

\vspace{0.25cm}

\begin{abstract}

\looseness -1 AI agents are being increasingly deployed in dynamic, open-ended environments that require adapting to new information as it arrives. To efficiently measure this capability for realistic use-cases, we propose building grounded simulations that replay real-world events in the order they occurred. We build \futuresim, where agents forecast world events beyond their knowledge cutoff while interacting with a chronological replay of the world: real news articles arriving and questions resolving over the simulated period. We evaluate frontier agents in their native harness, testing their ability to predict world events over a three-month period from January to March 2026. \futuresim\ reveals a clear separation in their capabilities, with the best agent's accuracy being $25\%$, and many having worse Brier skill score than making no prediction at all. Through careful ablations, we show how \futuresim\ offers a realistic setting to study emerging research directions like long-horizon test-time adaptation, search, memory, and reasoning about uncertainty. Overall, we hope our benchmark design paves the way to measure AI progress on long-horizon open-ended adaptation spanning multiple months in the real world.
\end{abstract}

\begin{figure}[H]
    \centering
    \begin{subfigure}[t]{0.49\linewidth}
        \centering
        \includegraphics[width=\linewidth]{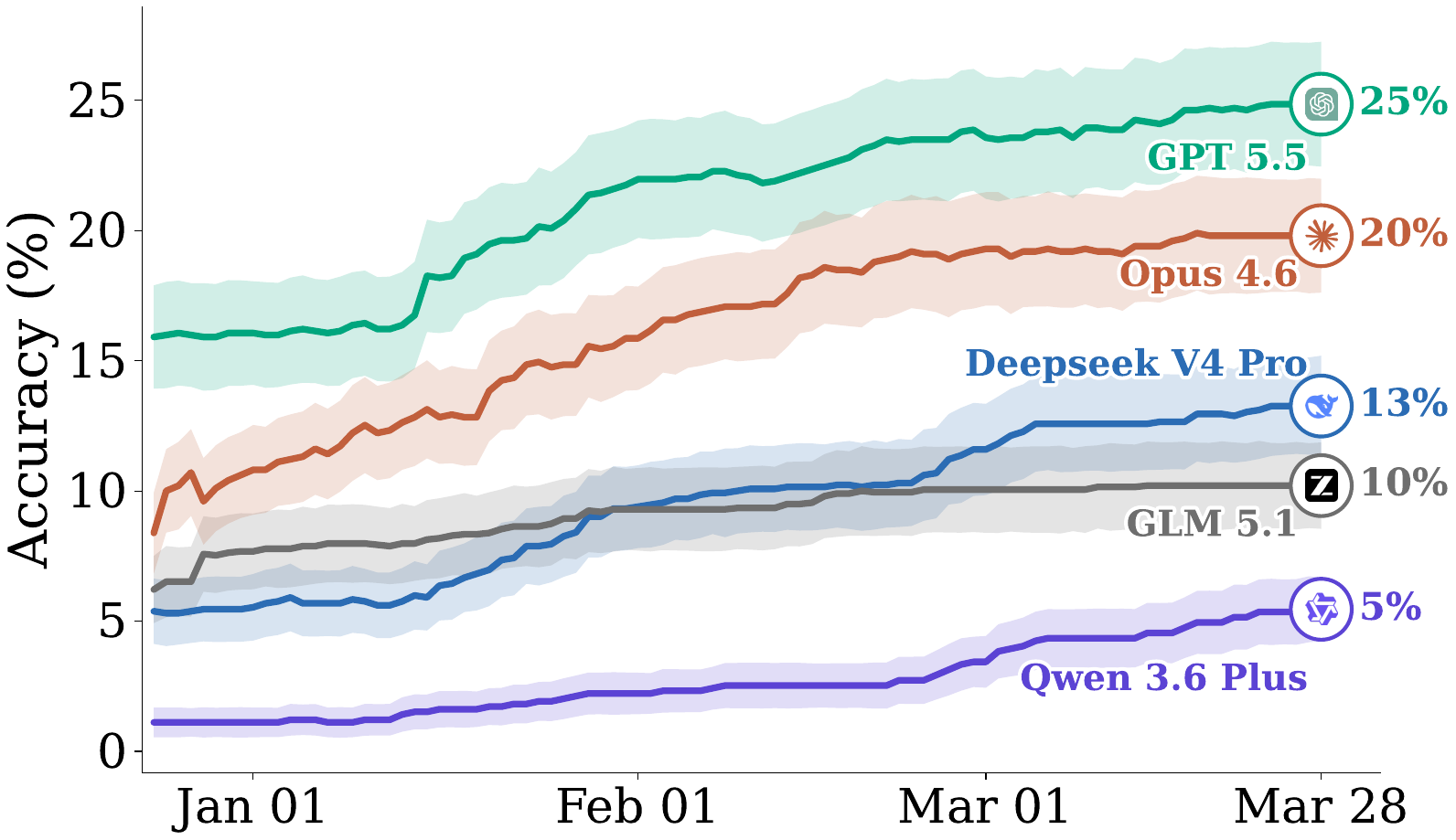}
    \end{subfigure}
    \hfill
    \begin{subfigure}[t]{0.49\linewidth}
        \centering
        \includegraphics[width=\linewidth]{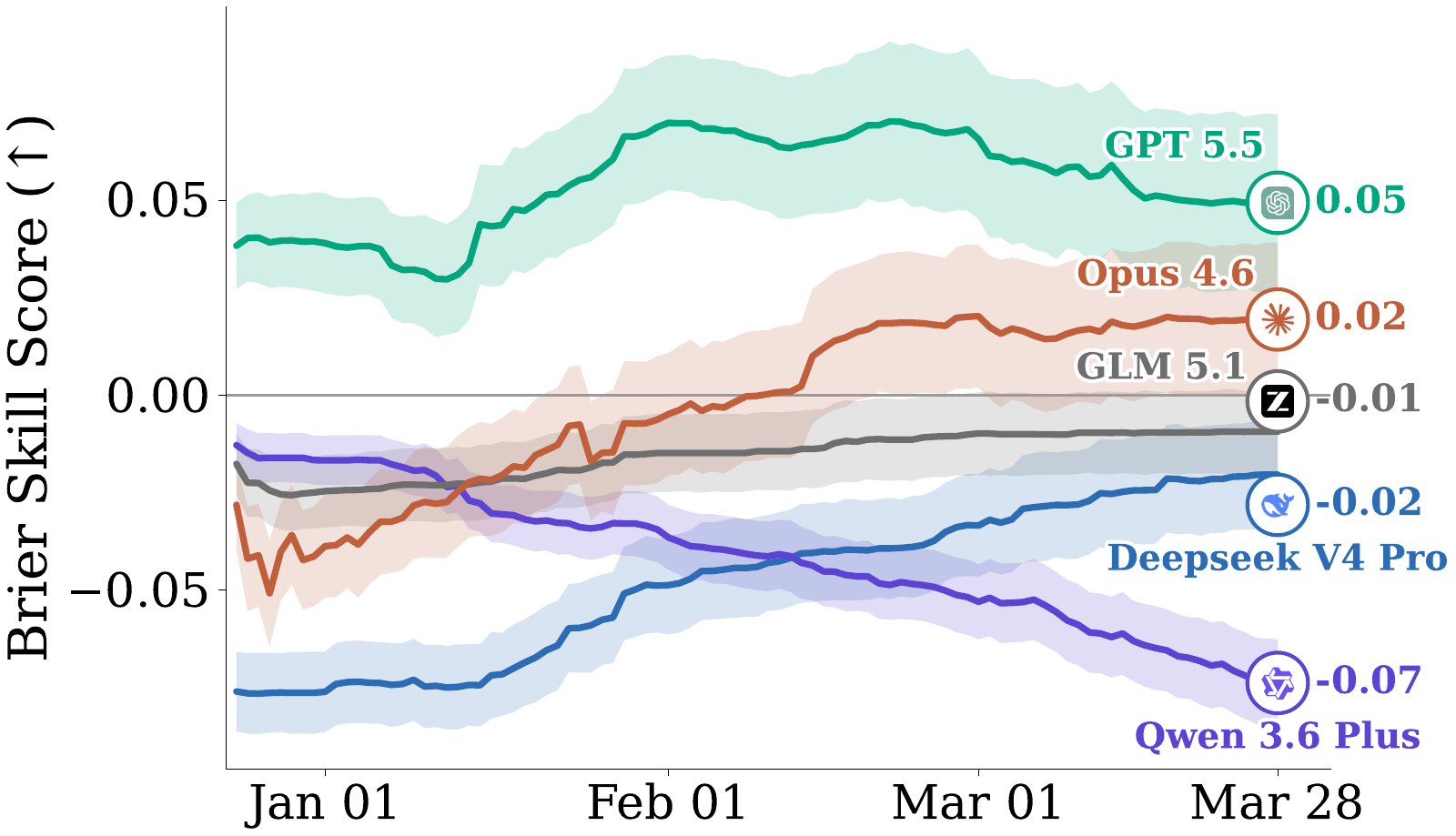}
    \end{subfigure}
    \caption{\looseness -1 In \futuresim, agents have to keep updating their predictions about future world events, by searching over an evolving news corpus up to the current simulation date. We evaluate all models in their recommended harness at maximum reasoning effort over 3 seeds. Models consume over 10M unique tokens and perform 500-4000+ tool calls over the course of the simulation. We see a clear separation in capabilities, with GPT 5.5 performing the best in both top-1 accuracy and Brier skill score. Predictions improve in accuracy over time across all models. However, open-weight frontier models have a negative Brier skill score, which makes them worse than abstaining from prediction altogether.
    }
    \label{fig:figure1}
\end{figure}

\etocdepthtag.toc{mainmatter}

\section{Introduction}
\looseness -1 Language model agents have achieved resounding success across diverse benchmarks. Yet, they are held back by their inability to adapt over a long-horizon in dynamic environments~\citep{foundation2026arcagi3newchallengefrontier, silver2025eraexperience}. While recent game and simulation benchmarks like ARC-AGI 3 and VendingBench~\citep{backlund2025vendingbenchbenchmarklongtermcoherence} provide informative proxies, adaptation should be measured in environments that align with how our world evolves~\citep{farquhar2019robustevaluationscontinuallearning}.
This raises the question:

\begin{quote}
    \textit{Can we evaluate adaptation in settings that require reasoning about how our world evolves?%
    }
\end{quote}

To address this, we introduce \futuresim, an interactive environment where agents have to predict world events beyond the underlying language model's knowledge cutoff. 
\futuresim\ replays real-world events in the temporal order they occurred to evaluate adaptive agents: each day, agents can search new information from news articles, and obtain ground-truth feedback on some questions, based on which they can keep updating their predictions.

Forecasting is a useful testbed for studying long-horizon test-time adaptation in two key ways. 
First, it assesses whether agents' priors align with how the world evolves, and whether they appropriately update them in light of new information. Indeed, a decade of research on human expert forecasters has found that they make continuous updates to their beliefs which leads to more accurate forecasts~\citep{mellers2015identifying, atanasov2020small}. Second, a predictive setting ensures agents' actions cannot change the underlying environment, allowing us to stay grounded in real-world events rather than approximately simulating counterfactual worlds~\citep{zhou2026mindsim2realgapuser}.

\looseness -1 For our experiments, we show such simulations can be bootstrapped from just timestamped source documents. We create prediction questions from news articles by refining the methodology proposed in~\cite{chandak2026curating}, and evolve the context as daily news arrives. We benchmark agents in a 90-day period from January to March 2026 over 330 events in \Cref{fig:figure1}. 
We find that \futuresim\ cleanly discriminates the capabilities of both closed and open-weight frontier models in their recommended agent harnesses. 
All models utilize new information to improve their predictions over the course of the simulation. The best performing agent, GPT 5.5 in Codex, consumes 3700 turns and 12.4M tokens spanning multiple sequential context window compactions in a single run showing the long-horizon nature of \futuresim.

Overall, our main contributions are as follows:

\begin{itemize}
    \item \looseness -1 To realistically evaluate adaptation in AI agents on the economically valuable task of forecasting, we build \futuresim: an environment replaying how the world evolved after the agents' knowledge cutoff. To prevent leakage of future information, we carefully sandbox agents, while providing them access to reliably dated offline snapshots of news articles.
    
    \item \looseness -1 \futuresim\ is open-ended: agents choose which questions to make forecasts on, and when. The questions are free-form, and agents have to come up with multiple possible outcomes with an incentive to report calibrated probability distributions and make timely updates. The environment itself is flexible, enforcing only two actions--- \texttt{submit\_prediction()} and \texttt{next\_day()}. Users can then specify their own combination of models, harnesses, and chronological event data.
    
    \item \looseness -1 We show significant room for improvement in test time adaptation for frontier models, even in their native agent harnesses. We design experiments that show how \futuresim\ can also support research on emerging capabilities like reasoning to search under uncertainty, memory, harness design, and multi-agent dynamics.

\end{itemize}

\section{Related Work}

\looseness -1 Prior work on realistic continual learning evaluations for language models emphasises using real-world time as the underlying distribution shift~\citep{lazaridou2021mind, yao2022wildtime}. We differ in evaluating agents on forecasting future events, targeting the capability to reason under uncertainty and take information-seeking actions over a long horizon.

\begin{table}[t]
\centering
\small
\setlength{\tabcolsep}{3.5pt}
\renewcommand{\arraystretch}{1.12}
\newcommand{\cmark}{\textcolor{green!60!black}{\ding{51}}}
\newcommand{\xmark}{\textcolor{red!70!black}{\ding{55}}}
\caption{\textbf{Comparison to related benchmarks.} \futuresim\ is the only reproducible benchmark that tests agents on open-ended adaptive reasoning about real-world events over a long horizon. We explain the methodology for each column in~\Cref{app:tableexplanation}.}

\resizebox{\linewidth}{!}{
\begin{tabular}{lccccc}
\toprule
\textbf{Benchmark} &
\textbf{World reasoning} &
\textbf{Reproducible} &
\textbf{Tests adaptation} &
\textbf{Open-ended} &
\textbf{Horizon Length} \\
\midrule
GAIA-2~\citep{froger2026gaia}              & \xmark & \cmark & \xmark & \xmark & 25 \\
$\tau^3$-Bench~\citep{shi2026tauknowledgeevaluatingconversationalagents}      & \xmark & \cmark & \xmark & \xmark & 33 \\
ARC-AGI-3~\citep{foundation2026arcagi3newchallengefrontier}           & \xmark & \cmark & \cmark & \cmark & 7,800 \\
BALROG~\citep{paglieri2025balrog}              & \xmark & \cmark & \cmark & \cmark & 100,000 \\
SWE-Evo~\citep{thai2026sweevobenchmarkingcodingagents}       & \xmark & \cmark & \cmark & \xmark & N/R \\
Vending-Bench-2~\citep{backlund2025vendingbenchbenchmarklongtermcoherence}     & \xmark & \xmark & \cmark & \cmark & 6,000 \\
KellyBench~\citep{grady2026kellybenchbenchmarklonghorizonsequential}          & \xmark & \cmark & \cmark & \cmark & 1,000 \\
ForecastBench~\citep{karger2025forecastbench}       & \cmark & \cmark & \xmark & \xmark & 1 \\
ProphetArena~\citep{yang2026llmasaprophet}        & \cmark & \xmark & \xmark & \xmark & 1 \\
PredictionArena~\citep{zhang2026predictionarenabenchmarkingai}     & \cmark & \xmark & \cmark & \cmark & N/R \\
\midrule
\textbf{\futuresim\ (Ours)} & \cmark & \cmark & \cmark & \cmark & 4,000 \\
\bottomrule
\end{tabular}
} 
\label{tab:benchmark-comparison}
\end{table}

\looseness=-1 \textbf{Long-horizon environments.} 
Recent benchmarks study long-horizon agent interaction in games and simulations. For example, BALROG~\citep{paglieri2025balrog} uses games to evaluate open-ended multimodal agents, while ARC-AGI 3~\citep{foundation2026arcagi3newchallengefrontier} uses interactive games for testing learning on the fly. GAIA-2~\citep{froger2026gaia} evaluates agents in human-authored mobile-use scenarios, while Vending-Bench~\citep{backlund2025vendingbenchbenchmarklongtermcoherence} and YC-Bench~\citep{he2026textttycbenchbenchmarkingaiagents} study agents in economic simulations. The dynamics in these environments are either human-designed or simulated by a model, making them imperfect proxies for societal evolution~\citep{seshadri2026lostsimulationllmsimulatedusers}. Live deployments such as Anthropic's Project Vend~\citep{anthropic2025projectvend} provide the most realism, but are slow, expensive, can raise safety concerns, and are not exactly reproducible across models. To circumvent such issues, we propose drawing environment dynamics from real-world event timestamps. This is related to benchmarks that replay the evolution of real artifacts, such as SWE-Evo~\citep{thai2026sweevobenchmarkingcodingagents} for coding agents, but we show this can be generalized across domains by creating prediction tasks from timestamped source documents~\citep{chandak2026curating}.

\looseness=-1 \textbf{Judgemental Forecasting benchmarks.}
Language model forecasting has recently received substantial attention in the domain of \textit{judgemental forecasting}, i.e., predicting discrete real-world events, unlike \textit{statistical forecasting} of time-series data~\citep{WEBBY199691}. Early evaluations~\citep{halawi2024approachinghumanlevelforecastinglanguage, karger2025forecastbench} tested models on static forecasting questions with retrieved evidence. More recently, benchmarks such as ProphetArena~\citep{yang2026llmasaprophet} and PredictionArena~\citep{zhang2026predictionarenabenchmarkingai} evaluate agents through live trading on prediction markets like Kalshi and Polymarket. \futuresim\ can subsume such evaluations by allowing benchmarking on any data, including theirs, while having a crucial advantage: live-market evaluations are difficult to reproduce and ablate because market conditions change over time, and reliable signals require long evaluation windows. \futuresim\ instead provides a replayable environment that allows controlled ablations of search, memory, harness design, and adaptation. Parallel work KellyBench~\citep{grady2026kellybenchbenchmarklonghorizonsequential} studies long-horizon betting on Premier League matches through statistical model building. In contrast, \futuresim\ evaluates agents’ ability to forecast diverse, general world events from evolving evidence.

\section{The \futuresim ~Environment}
\label{sec:futuresim}

\futuresim\ is a chronological environment, and in our work, we set a daily cadence, with time-steps corresponding to real-world dates. We now describe the mechanics of \futuresim.

\begin{figure}[t]
    \centering
    \includegraphics[width=0.7\linewidth]{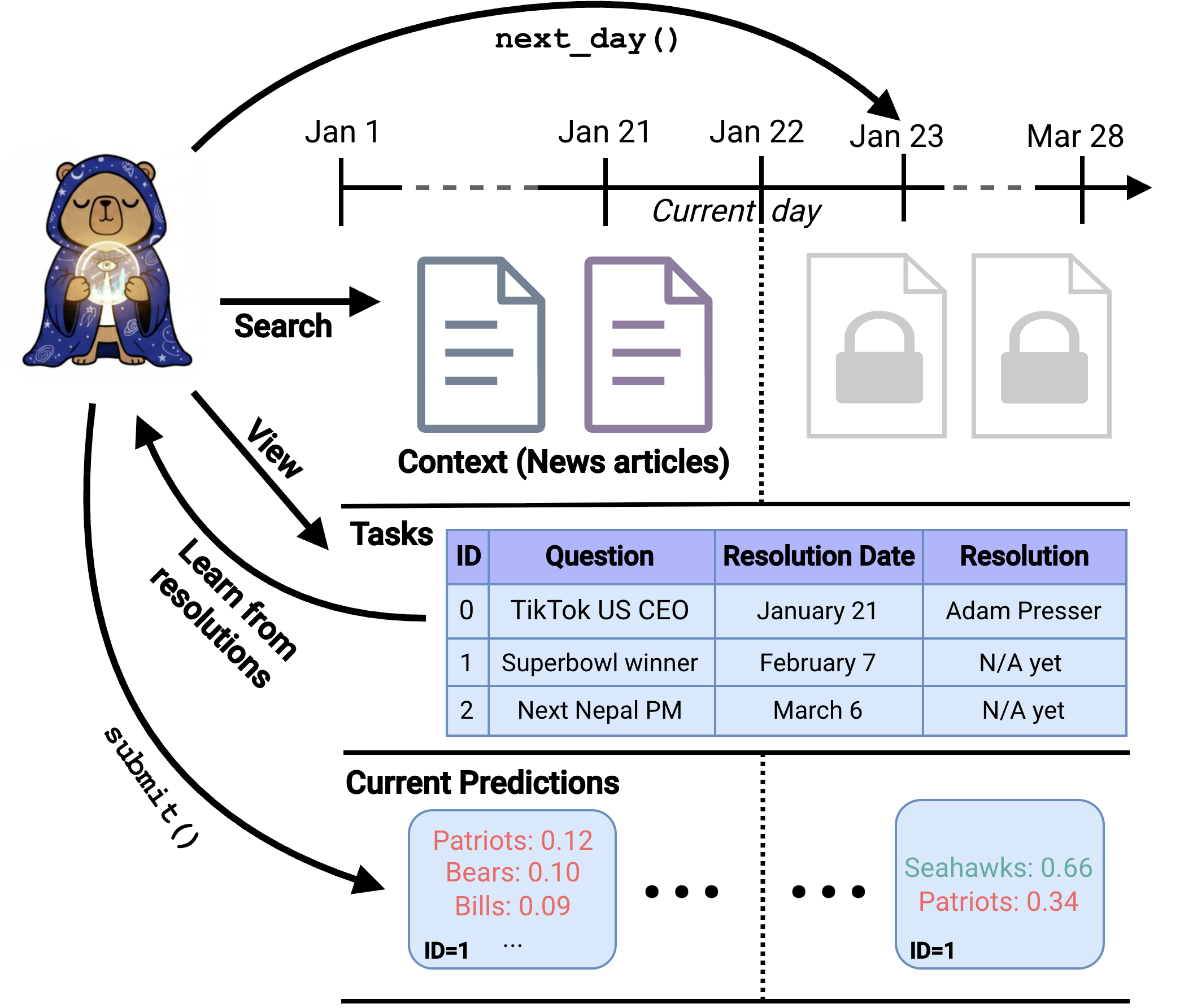}
    \caption{In \futuresim, agents are evaluated in a dynamic forecasting environment. They can search news up to the current simulation date (access to future information is restricted), gather feedback from resolved questions and choose when to update their predictions. The environment only enforces two actions: \texttt{submit()} required to submit or revise predictions for a question and \texttt{next\_day()} to advance the simulation by one day. 
    }
    \label{fig:simulation}
\end{figure}

\subsection{Environment Design}

The task state and context state of the environment are updated at each time step as follows.

\looseness -1 \paragraph{Tasks.}
\looseness=-1 The current state of tasks is maintained as a CSV file, with each row containing data about one forecasting question (see \Cref{app:sample-question} for an example). This includes the question's background information, resolution criteria, resolution date, and the agent's most recent forecast for the question. Unlike prior forecasting evaluations that expect a single predicted outcome per question, we allow agents to submit a probability distribution over multiple possible outcomes that it has to come up with itself. Given the inherent uncertainty in forecasting, this allows a more complete measurement of the agent's beliefs. Once the simulation date passes a question's resolution date, the ground-truth outcome is added to the state file. 

\paragraph{Context.} At each time-step, the context consists of documents that became available by then. We use Common Crawl News~\citep{nagel2016-ccnews} as it provides reliably dated snapshots of news articles which are not affected by future updates~\citep{chandak2026curating}.

\textbf{Agent Interaction.} The environment itself is minimal, allowing the definition of arbitrary \textit{agents}. By an agent, we mean the tools, prompts, and orchestration that models can use to interact with the environment, which we describe later. The environment itself provides only two actions:
\begin{enumerate}
    \item \texttt{submit\_forecast(question\_id, outcomes)}: register or update the probability distribution over predicted outcomes for one active question.
    \item \texttt{next\_day()}: ends the current time-step and advances the simulation, running any necessary evaluations. The task and context state are updated for the next time-step.
\end{enumerate}

\paragraph{Evaluation.} Let $\mathcal{Q}$ denote the set of forecasting questions, with each $q \in \mathcal{Q}$ having a ground-truth answer $y_q$. Formally, an agent's forecast is a set of outcomes $\Omega_q = \{o_1, \ldots, o_k\}$ with probabilities $p_q(o_i) \ge 0$ satisfying $\sum_{i=1}^{k} p_q(o_i) \le 1$. We use language model-based answer matching~\citep{chandak2025answermatchingoutperformsmultiple} to check if an outcome $o$ matches the ground-truth $y_q$. We report the following metrics:

\textbf{Brier Skill Score (BSS).}
For our free-form questions with no predefined set of choices, past work \citep{damani2026beyond, chandak2026curating} has defined Brier score for the case when only a single outcome is provided by the forecaster. However, reasoning about the future inherently involves modeling multiple possible outcomes. In this work, we define Brier Skill Score to incorporate that by adapting the existing multi-category Brier score \citep{mucsanyi2023proper} as follows:
\[
\mathrm{BSS}(q)
=
1 -
\sum_{o \in \Omega_q \cup \{y_q\}}
\left(p_q(o) - \mathbf{1}[o = y_q]\right)^2
\]
where $p_q(o) = 0$ for $o \notin \Omega_q$. In \Cref{app:multibrier}, we prove that this is a proper scoring rule. Higher score is better: $1$ is for a fully confident correct answer, $0$ for abstaining by reporting zero probability, and $-1$ is for assigning all probability to wrong guesses. A worked example is provided in \Cref{app:bss-example}.

\textbf{Accuracy.}
We measure the accuracy of the outcome assigned the most probability (top-1 accuracy)
\[
\mathrm{Acc}
=
\frac{1}{|\mathcal{Q}|}
\sum_{q \in \mathcal{Q}}
\mathbf{1}
\left[
\arg\max_{o} p_q(o) = y_q
\right]
\]

\looseness=-1 While the Brier skill score incorporates both the correctness of predicted outcomes and the calibration of the probabilities distributed over them, the accuracy is agnostic of the probability distribution and only depends on the correctness of the top outcome predicted for each question. In our results, we report the mean across all questions $\mathcal{Q}$, reporting the projected score based on the current prediction for unresolved questions, and scoring $0$ for questions where an agent has registered no prediction.

\section{Benchmarking Frontier Agents in \futuresim}
\label{sec:benchmark}

We now describe the current \futuresim\ benchmark setup and results on frontier agents.

\subsection{Experimental setup}

\textbf{Forecasting Questions.} Forecasting evaluations require fresh questions about events that occur after the knowledge cutoffs of models being tested. We follow the scalable automated methodology proposed by \citet{chandak2026curating} to curate short-answer forecasting questions without predefined choices, from news articles. We make multiple refinements to this pipeline as described in ~\Cref{app:questiondetails}, such as using stronger models, filtering out easy questions, and improving the reliability of resolution dates. We use Al~Jazeera articles as source documents, as we found this to be the highest quality source with a large number of articles freely accessible via CCNews. Finally, we obtain 330 forecasting questions resolving between January 1st and March 28th 2026, which is after the knowledge cutoffs of the models we evaluate. Each question is active from the start of the simulation (2025-12-24) until its resolution date. Agent predictions to each question are evaluated by DeepSeek v3.2 as the answer matcher, with prompts in \Cref{app:answer-matching-prompts}. We limit the outcome set size predicted by agents to $\leq 5$ per question for answer matching efficiency.

\textbf{Search Corpus.} The search corpus is a deduplicated snapshot of CCNews containing 7.36M articles from 141 distinct news sources between January 2023 and March 2026. Agents have access to articles published on or before the current simulation date. About 7.12M articles are available on day 0, and 244K new articles enter the corpus over the 88-day simulation window. We rely on a local offline news corpus instead of external search APIs like Brave as we found issues with their date filtering, with an example provided in \Cref{app:contextdetails}.

\looseness=-1 We provide the topic distribution of questions in \Cref{app:topic-distribution}, noting it is more representative of important events than prediction markets. That said, the prediction questions and search corpus can be changed easily in \futuresim, and we will keep updating both for new models.

\textbf{Tools available to agents.} All agents are run with full access to the harness's built-in shell and file tools (eg: \texttt{Bash}, \texttt{Read}, \texttt{Write}, \texttt{Grep}). These commands can be used to access the raw articles, which are organized into folders by date for ease of browsing, and create, edit or execute files in the agents own workspace. To prevent leakage of future information, we do not allow even read access beyond the agent's workspace and accessible article folders up to the current simulation date and disable web search tools and commands like \texttt{curl}, with sandbox details in \Cref{app:sandboxdetails}. Finally, we provide access to a hybrid semantic + keyword search tool over the news corpus, implemented using LanceDB, that returns 5 chunks of 512 tokens. We use Qwen3 8B embeddings for the semantic search. The search tool allows defining the period between which to fetch chunks, in the format \texttt{search\_news(query, from\_date, to\_date)}.

\textbf{Agents tested.} Unless specified, we evaluate at maximum reasoning effort, in the recommended harness for open-weight models, and native code harness for closed ones: We use Codex for GPT 5.5; OpenCode for Qwen3.6 Plus; and Claude Code for Opus 4.6, DeepSeek V4 Pro, and GLM 5.1. We use Claude Code version 2.1.132, Codex version 0.125.0, and OpenCode version 1.4.11. For DeepSeek and Qwen models, we add additional prompting to encourage them to update predictions, as in our initial runs they would only submit on first day and just keep calling \texttt{next\_day()}.

\begin{figure}[t]
    \centering
    \includegraphics[width=0.95\linewidth]{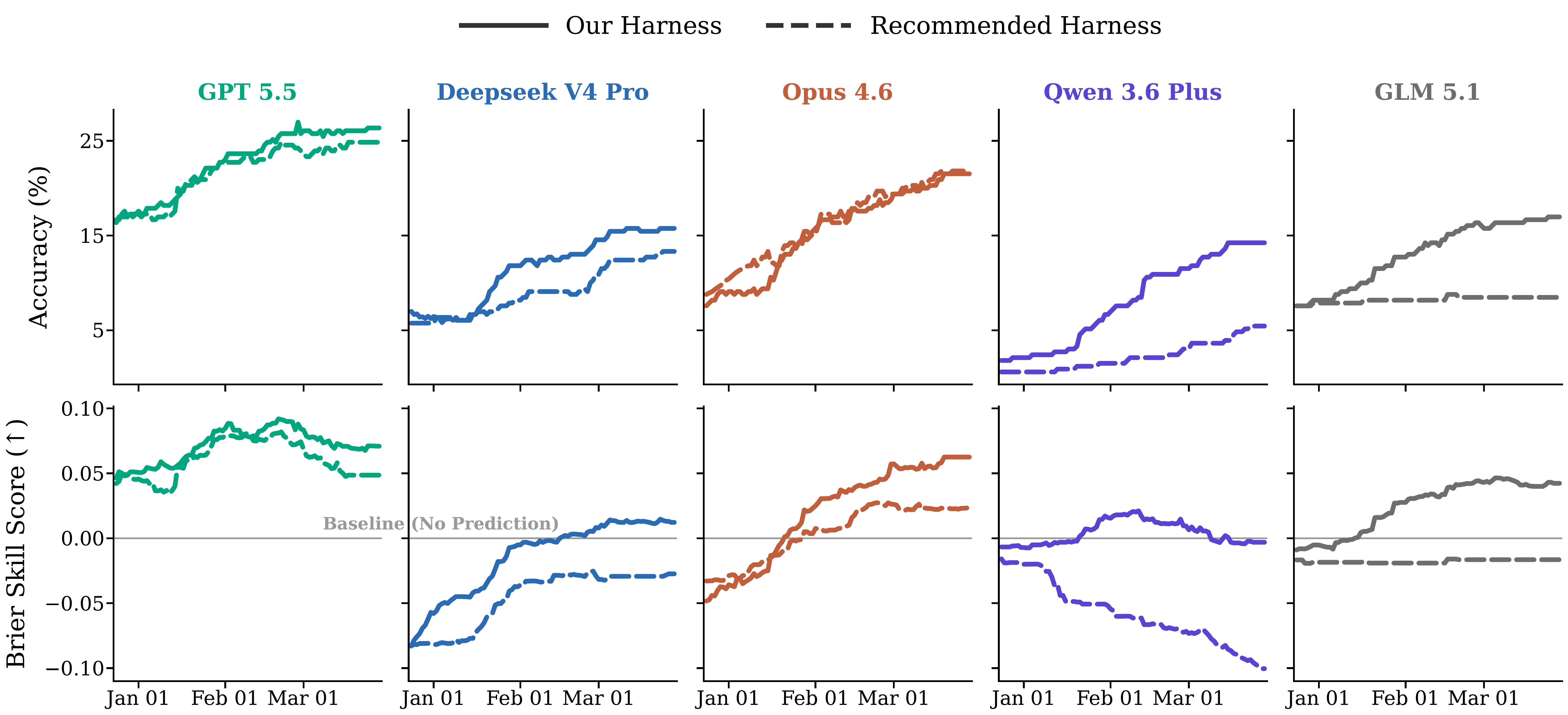}
    \caption{\textbf{Agent performance on \futuresim.} GPT 5.5 by far performs the best in both accuracy and Brier skill score. (Top) Open-weight models achieve significantly higher accuracy in our modified harness, improving consistently over the course of the simulation. (Bottom) Except for GPT 5.5, all models start at a negative Brier skill score, and while they fail to improve significantly by default, they cross over to a positive Brier skill score in our harness.}
    \label{fig:harness}
\end{figure}

\textbf{Our Baseline Harness.} While it is interesting to study agent performance without any task-specific hand-holding~\citep{foundation2026arcagi3newchallengefrontier}, better orchestration of tools and prompts can help elicit agent potential on any given task~\citep{wang2025openhands}. Thus, we also build a custom harness as an improved baseline for future work on our benchmark. For developing our harness, we iterated by testing features on DeepSeek V3.2 performance on OpenForesight (May to August 2025)~\citep{chandak2026curating} as a validation setting. We then overlay the features developed on top of native harnesses which includes context consumption feedback, procedural forecasting guidelines, loading task state as a dataframe so agents can process it with python query execution, a forced memory update phase with structured memory tools and per-question memory. We provide details in Appendix~\ref{app:harness}.

\begin{figure}[t]
    \centering
    \includegraphics[width=\linewidth]{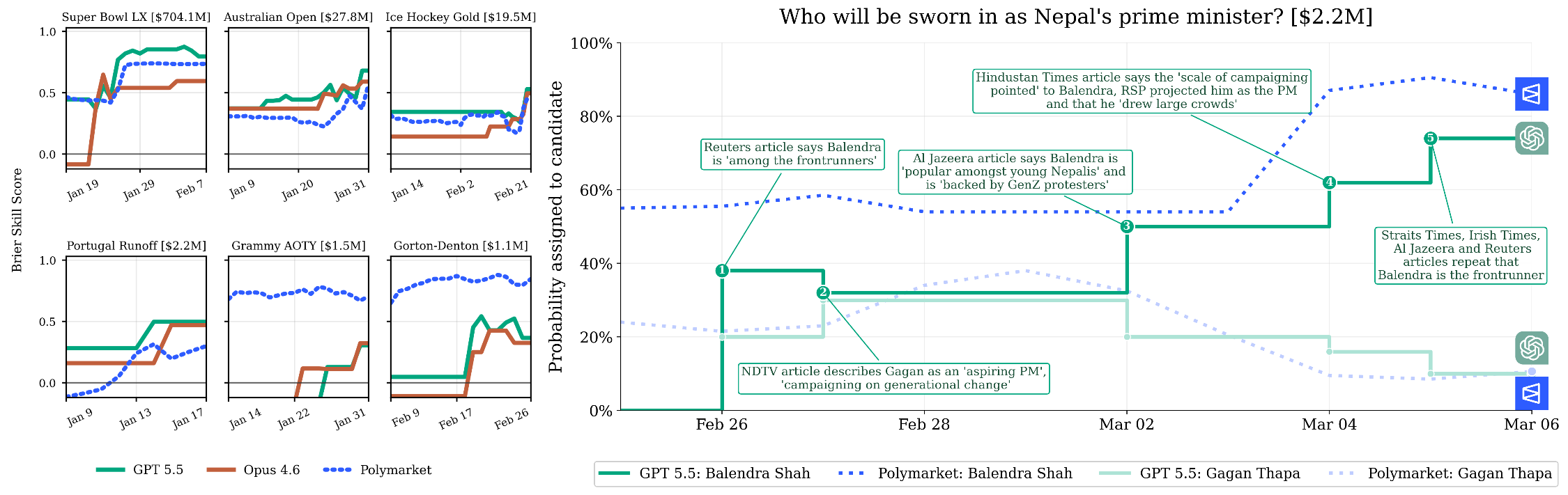}
    \caption{\looseness -1 \textbf{Comparisons of frontier agent prediction to human crowd aggregates on Polymarket.} (Left) We find that GPT 5.5 leads the real market aggregate for some questions, including the Super Bowl market, which traded $700$M in total volume. That said, it performs relatively poorly on some other markets, with Claude Opus 4.6 closely tracking GPT 5.5 predictions but usually slightly worse. (Right) We zoom in on the market about the Nepal PM election, annotating rationales from GPT 5.5's prediction update trajectory. Notably, GPT 5.5 cites reasonable evidence, and its updates on candidate probabilities are aligned with, albeit lagging the human aggregate.}
    \label{fig:polymarket}
\end{figure}

\subsection{Results}

\Cref{fig:harness} shows the Brier skill score and accuracy over the course of the simulation for different models, both in their default-recommended harness and with our harness. GPT 5.5 leads by a large margin on both metrics, starting with the best day 0 predictions and improving throughout. DeepSeek V4 Pro also improves substantially in both harnesses, whereas Qwen3.6 Plus and GLM 5.1 improve only in our harness; Qwen3.6 Plus even deteriorates in calibration in its recommended OpenCode harness. These gaps show that good harness design can elicit test-time adaptation even from open-weight models. Claude Opus 4.6 also shows better calibration in our harness than native Claude Code, suggesting coding harnesses may not be generally optimal.

\looseness=-1 We believe further harness engineering can likely elicit stronger scores on \futuresim, like any benchmark. However, \futuresim involves long-horizon experiments before final metrics can be measured, making it a challenging test-bed for emerging context management methodologies like recursive language models~\citep{zhang2026recursivelanguagemodels}, and autoresearch approaches that let agents self-improve their own harness~\citep{zhang2026hyperagents, lou2026autoharnessimprovingllmagents}.

\vspace{-0.1cm}
\subsection{Analyzing Agent Behaviours}

\looseness -1 We now analyze the trajectories and reasoning traces of agents and report some interesting strategies and failure modes we observe. GPT 5.5 has the best Top-1 accuracy, but we find among its incorrect final predictions, 27.4\% assign at least 0.5 probability to the wrong top answer, and 9.1\% assign at least 0.75, indicating significant overconfidence. DeepSeek V4 pro updates its prediction on almost every question during the simulation, which helps it improve substantially from its initial weak predictions. However, it often places null predictions like \textit{``no new appointment''} which reduce its score.

GLM 5.1 is relatively conservative, with the smallest overconfidence rate among wrong answers (only 3.7\% above 0.5), but performs poorly as it makes the lowest number of updates among the models tested. Qwen3.6 Plus abstains from more questions than any other model, only registering predictions on 36.7\% questions, while all other models submit predictions on all the questions. Note that even when we subset on the questions which all agents predict on at least once (\Cref{app:subset_questions}), the agent rankings are still consistent with \Cref{fig:figure1}. Across models, we observe instances of ``self-conditioning'' \citep{sinha2026the}, where models start treating prior rationales and lessons they wrote in memory as hard-truths, leading to subsequent overconfident mistakes.

\paragraph{Comparing Agents to Human Aggregate Forecasts.} While we use a generalizable recipe to create forecasting questions from news reporting, some questions also have corresponding prediction markets on Polymarket. In Figure~\ref{fig:polymarket} (left), we compare model predictions to the human aggregate on these questions at corresponding times. We find that GPT 5.5's prediction updates are not only aligned with movements in the human aggregate for multiple markets, sometimes they even lead the market (e.g. predicting the Super Bowl and Portugal Runoff winner), which could be potentially used to make large profits given the millions in volume traded on these markets. At the same time, the model's predictions are also significantly worse on the Grammy and UK constituency election (Gorton and Denton) markets which anecdotally both aggregate human preferences. 

\looseness=-1 In Figure~\ref{fig:polymarket} (right), we visualize GPT 5.5's update trajectory on the Nepal Prime Minister market, finding its updates appropriate based on the cited evidence. Naturally, given the millions in volume traded for these questions on Polymarket, the human aggregate updates are smoother than a single agent run, but GPT 5.5's updates are closely aligned with their movements, if only a bit lagging as our search corpus is not as fresh, notably missing social media posts. Overall, this analysis shows that our simulation is realistic, and tracks economically valuable capabilities.

\section{Capabilities Tested}
\label{sec:capabilities}
\vspace{-0.1cm}
\looseness -1 Performing optimally on \futuresim\ requires agents to be capable on various fronts: agents should focus on the right questions based on the new context available at each time-step, creatively search for relevant evidence, remember useful information over the course of the simulation, and learn from environment feedback as past predictions resolve. In this section, we show how \futuresim\ ablations can isolate the effect of these capabilities to enable research on these emerging directions.

\begin{figure}[t]
    \centering
        \includegraphics[width=0.7\linewidth]{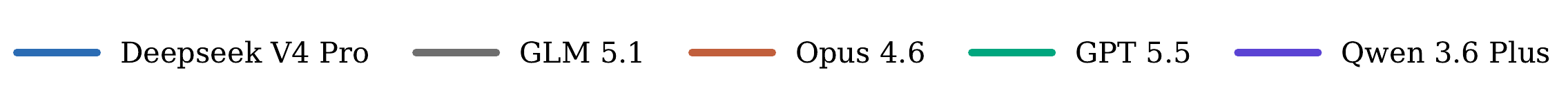}
    \makebox[\linewidth][c]{%
        \begin{minipage}[t]{0.45\linewidth}
            \centering
            \includegraphics[width=0.8\linewidth]{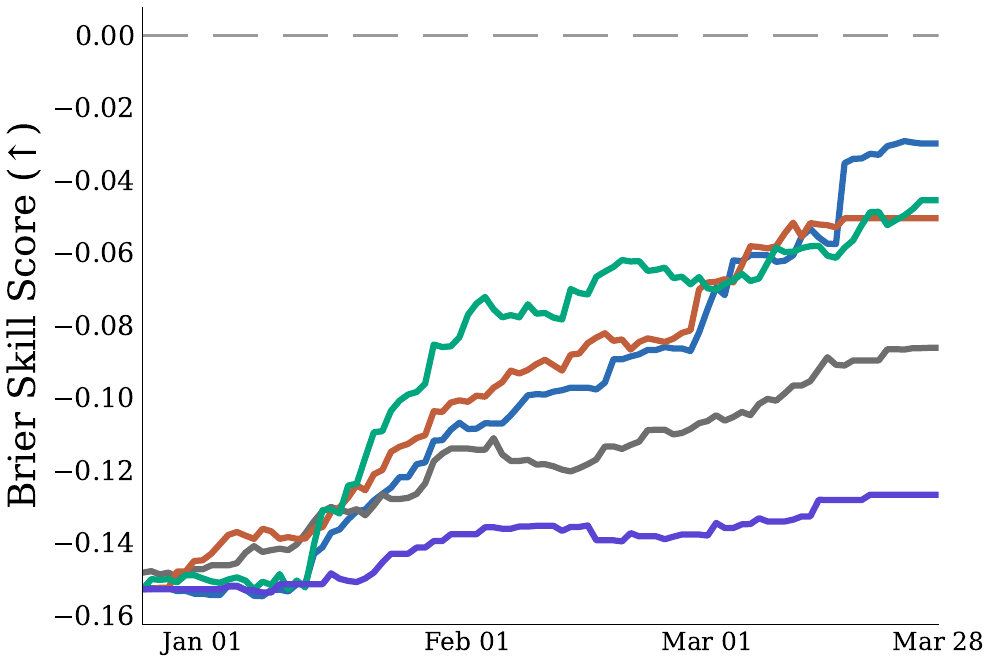}
        \end{minipage}\hspace{0.03\linewidth}%
        \begin{minipage}[t]{0.45\linewidth}
            \centering
            \includegraphics[width=0.9\linewidth]{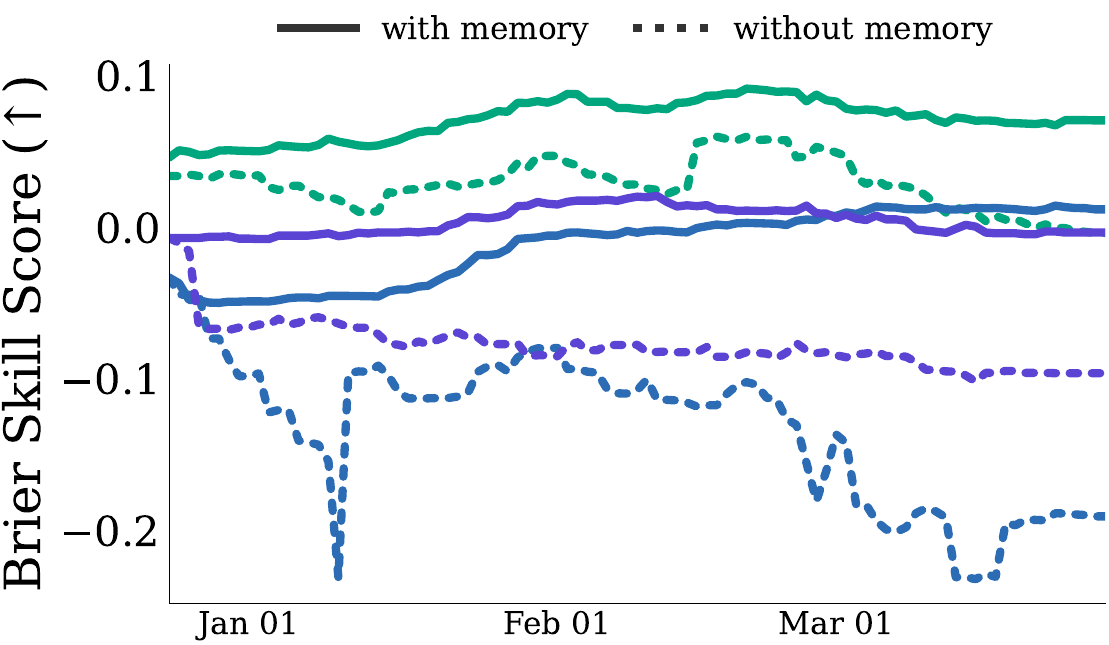}
        \end{minipage}%
    }
    \caption{\textbf{(Left) Comparing test-time adaptation across agents.} We start different models in our improved harness at the lowest performing agent's (Qwen 3.6 Plus) initial prediction set, to maximize scope for improvement over time. We find agents get anchored to the initial predictions, failing to adapt them sufficiently to even reach the no prediction baseline of 0 brier skill score, even when their own capabilities are stronger. \textbf{(Right) Benefits from memory.} By ablating the ability to write and fetch memories at test time, we find models clearly benefit from inference-time memory.}
    \label{fig:adaptation-memory}
\end{figure}

\subsection{Test-time adaptation}

Each day, as the available context grows in the simulation, agents should update their forecasts based on the new information. Moreover, as some forecasts are resolved on the date the ground-truth becomes known, agents can reflect on their predictions to learn forecasting meta-skills, akin to on-the-job learning. This leads to two natural research questions: \textit{1) How do different agents compare in their ability to perform test-time adaptation?} and \textit{2) How can we know, for a single agent, how much room there is to improve at test time adaptation?}

\looseness=-1 \textbf{Comparing test-time adaptation across agents.} In the standard setup of \futuresim\ presented until now, each agent has a different starting performance, making it hard to compare test-time adaptation ability across agents. To isolate and compare adaptation, in the left panel of \Cref{fig:adaptation-memory}, we fix the initial forecasts to the worst agent's (Qwen3.6 Plus) and evaluate all agents in our harness, which maximizes the scope for improvement at test-time. We observe that GPT 5.5, Claude Opus 4.6, and DeepSeek V4 Pro all show similar levels of improvement, while GLM 5.1 does worse relative to them, and finally Qwen3.6 Plus barely improves its Brier skill score over time. At the end of the simulation, all agents fail to reach even the baseline Brier skill score (0) of not predicting at all, let alone their original positive Brier skill score performance. This is despite being informed that their predictions are obtaining a negative Brier skill score as they resolve, showing how frontier agents fail to adapt away from bad initial anchors.

\looseness -1 \textbf{Can agents match full knowledge performance with sequential arrival of information?} During our simulation, each day new news arrives sequentially, and agents have to incorporate it by updating forecasts up to one day before the resolution of each question. One way to test how efficiently agents update their predictions is by checking whether they can match their own performance when directly asked to predict each question individually, one day before its resolution date. In this setting, agents have maximal context available for their forecasts, no earlier predictions to anchor to, and are separately prompted to focus on each question individually. We perform this experiment for GPT 5.5 xhigh in Codex, where comparing the left bars of both subplots in \Cref{fig:search}, we find that directly searching with full information available (green) leads to much higher accuracy (31.2\% vs 24.8\%) than sequentially updating predictions as information becomes available in the simulation (blue). This showcases a clear inefficiency in the agent's ability to adapt its predictions at test time and also shows a lowerbound on the maximum accuracy achievable in the simulation. 

\looseness=-1 We hope these results spark work on test-time adaptation in models using \futuresim, including continued finetuning~\citep{yang2025synthetic} to internalize the large amount of new news arriving in the simulation, and test-time training on forecast resolutions~\citep{hardt2024testtime}. 

\vspace{-0.2cm}
\subsection{Memory}

Given the limits of the context window size in transformer-based language models, agents need alternate ways to implement long-term memory. The current approach to frontier agent deployments uses a file-based memory with structured access, which we implement in our harness with carefully designed, minimal yet informative guidelines. We ablate agents having write access to any memories in the right panel of \Cref{fig:adaptation-memory} and find that all three models tested perform worse without memory. This shows the importance of memory for our task, as observed from qualitative traces: agents store and use post-resolution feedback, information found via search over the context, and summaries of past rationale. Memory also protects against drift: when searches returned weak or stale evidence, agents with memory often retained a calibrated prior rather than re-reasoning from scratch and over-updating toward a plausible but unsupported alternative.

\begin{wrapfigure}[17]{r}{0.45\linewidth}
\vspace{-1.1em}
    \centering
    \includegraphics[width=\linewidth]{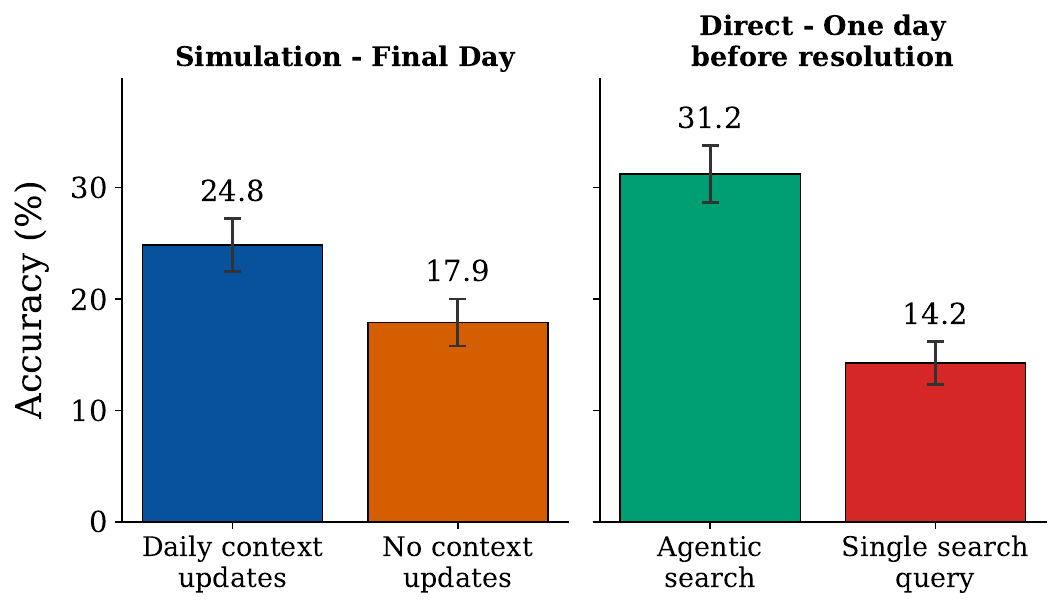}
    \caption{\looseness=-1 \textbf{Benefits from search.} We evaluate GPT 5.5 at xhigh reasoning effort in four different settings, showing the large benefits of agentic search (green vs red) and utilizing the evolving context corpus in \futuresim. 
    }
    \label{fig:search}
\vspace{-1.0em}
\end{wrapfigure}

\subsection{Search}

\looseness=-1 Unlike many existing search benchmarks~\citep{yang-etal-2018-hotpotqa, wei2025browsecomp}, in \futuresim, the search is not for past facts knowable perfectly from the accessible documents. Rather, (i) the document corpus evolves, adding more context each day, and (ii) agents have to creatively reason to come up with \textit{what to search for}, based on the scattered evidence across documents seen until then. To measure the effect of these properties on \futuresim\ performance, we ablate full agentic search over the evolving corpus in the simulation in two key ways, with results in \Cref{fig:search}. First, we ablate the daily arrival of news articles during the simulation, finding that this leads to significantly worse accuracy on the last day (24.8\% in blue with daily updates vs 17.9\% in orange without context updates). This shows the importance of continued search for fresh evidence as the corpus evolves in the environment. Second, we compare full agentic search (green) to retrieving only articles with a single semantic search query using the question title (red), where for each question, we perform the search 1 day before it resolves. We find full agentic search leads to double the accuracy of a single query, demonstrating the importance of sequential information-seeking for forecasting (Brier skill score in \Cref{app:searchablationbrier}).

\begin{wrapfigure}[14]{r}{0.45\linewidth}
\vspace{-1.1em}
    \centering
    \includegraphics[width=\linewidth]{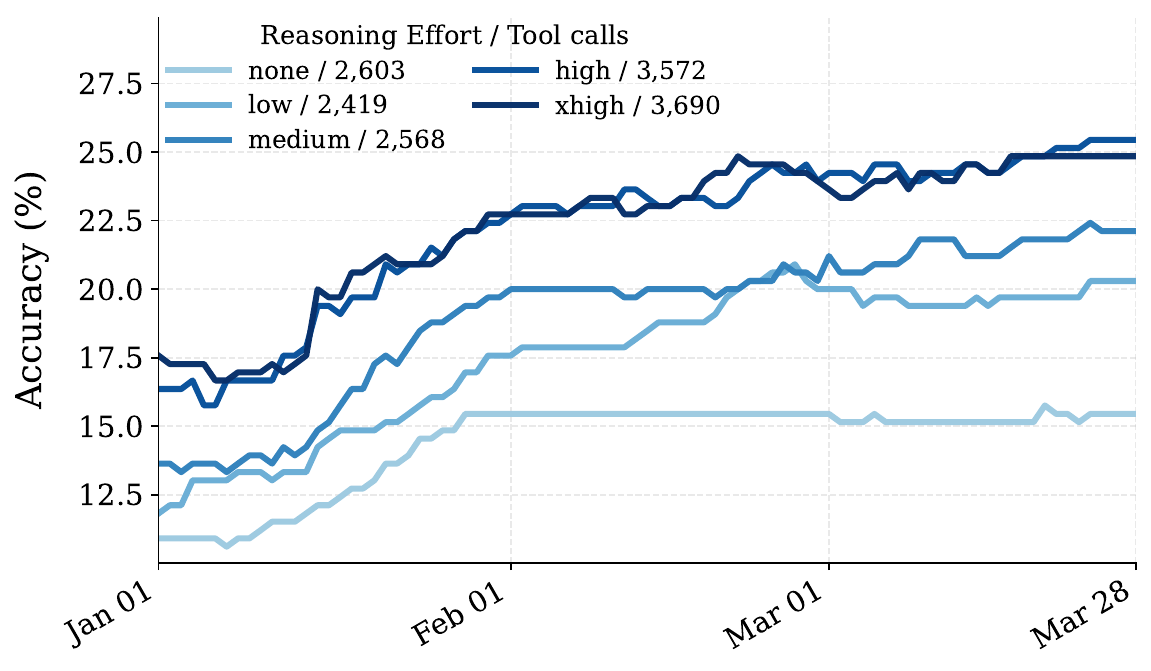}
    \caption{\textbf{Benefits from scaling test-time compute.} We run GPT 5.5 in all available reasoning efforts, finding more inference compute leads to better accuracy on \futuresim .}
    \label{fig:inference-scaling}
\vspace{-1.0em}
\end{wrapfigure}

\looseness -1 These results show \futuresim\ can support research on reasoning-intensive sequential search agents~\citep{jin2025searchr1}, as well as better underlying search tools~\citep{khattab2020colbertefficienteffectivepassage, shao2025reasonir} for the dynamic, and uniquely bayesian search setting of forecasting~\citep{murphy2026agenticforecastingusingsequential}.

\subsection{Inference Scaling}

For economically valuable tasks like forecasting world events, scaling test-time compute can provide beneficial gains in performance worth the extra cost~\citep{snell2025scaling}. To demonstrate this, we compare GPT 5.5 at five different reasoning efforts in \Cref{fig:inference-scaling}, observing consistent improvements in accuracy at higher reasoning budgets (similar trends hold for brier score in \Cref{app:inferencescalingbrier}). We note that GPT 5.5 at high effort uses much more tool calls and obtains significantly higher accuracy than lower efforts, though xhigh does not lead to further improvements over high.

\begin{wrapfigure}[24]{r}{0.45\linewidth}
\vspace{-1.1em}
    \centering
    \includegraphics[width=\linewidth]{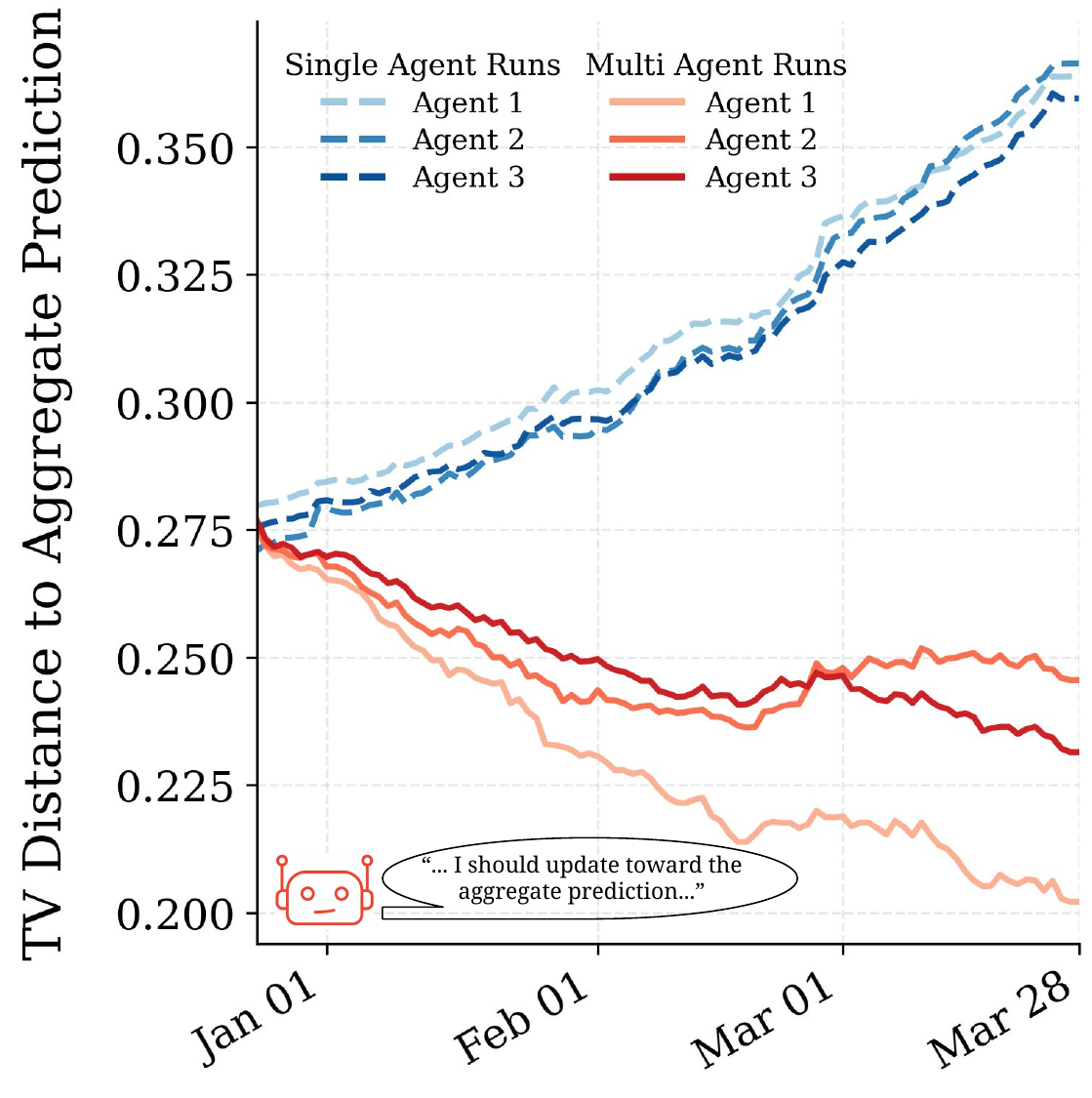}
    \caption{\textbf{Multi-agent dynamics.} When we run multiple copies of DeepSeek V3.2 agents simultaneously, we see agent predictions start moving toward the aggregate, unlike independent single agent runs where predictions diverge over time.}
    \label{fig:convergence}
\vspace{-1.0em}
\end{wrapfigure}

We are excited to support research on performance-cost scaling for test-time compute paradigms like parallel aggregation~\citep{venkatraman2026recursiveselfaggregationunlocksdeep}, multi-agent systems~\citep{tran2026singleagentllmsoutperformmultiagent}, and scaling environment interactions~\citep{shen2025thinkingvsdoingagents} on \futuresim.

\subsection{Multi agent dynamics}
\label{sec:multiagent}

As we move towards a world where agents owned by different users interact and compete~\citep{bansal2018emergent}, it is interesting to study how agents adapt in the presence of other agents. To demonstrate how \futuresim\ can support multi-agent experiments, we demonstrate a simple example, where three identical DeepSeek v3.2 agents compete simultaneously. The agents only depend on each other through an information bottleneck, which is the current aggregate prediction for each question, similar to crowd aggregates on prediction markets. In \Cref{fig:convergence} we observe that over the course of multi-agent simulations, agents converge towards similar predictions, while in independent single agent runs their predictions diverge over time. This is despite prompting agents that they will be graded on the \textit{peer score}, which \textbf{incentivizes distinct} informative predictions from the crowd aggregate. We provide details on aggregate prediction, peer score and TV distance computation in ~\Cref{app:multiagentaggtv}. The absolute agent performance itself is similar in single-agent and multi-agent runs as shown in \Cref{app:multiagentperformance}.

\looseness -1 As a simulation grounded in real-world dynamics, we hope \futuresim\ supports multi-agent research, such as improvements from self-play~\citep{silver2018general}, and studies on how diverse agents evolve~\citep{park2023generative} when equipped with richer communication channels~\citep{meta2022human}.

\section{Conclusion}
\label{sec:conclusion}

\looseness=-2 In this work, we build \futuresim\, a simulation that replays world events beyond a model's knowledge cutoff to evaluate the ability of frontier agents to adapt their predictions. \futuresim\ utilizes forecasting questions made from news documents, and updates the context available to agents at each time-step, tasking them to maintain a distribution of possible outcomes and their probabilities for each question. We find that current frontier agents act suboptimally in \futuresim\ in interesting ways, though the absolute performance we report can be considered a lower-bound on model capabilities, as improvements in the available harness, tools, or context corpus can lead to better results. That said, we show that many emerging research directions can be studied using \futuresim\ by running experiments that isolate different capabilities.

Indeed, a key advantage of our benchmark is flexibility: it exposes a minimal set of actions that allows benchmarking of any model and harness, and the data can easily be updated, or extended to other domains where chronological events are available. The only constraint is that \futuresim\ focuses on a purely predictive settings, where agent actions cannot significantly change environment dynamics, as otherwise counterfactual worlds would have to be simulated accurately. This limits applicability to decision making domains where predictions can be performative~\citep{pmlr-v119-perdomo20a}, or actions change the environment~\citep{bruce2024genie}. 
Yet, studying prediction abilities still offers valuable insights about the world model of agents, and how they adapt to new information~\citep{ha2018worldmodels}. We thus hope \futuresim\ guides AI progress towards the next frontier, of building agents that continually adapt and learn in the real-world.

\clearpage

\paragraph{Acknowledgements.} We thank Florian Brand, Albert Catalán Tatjer, Thomas Grady, \textit{hallerite}, Omar Khattab, Ilija Lichkovski, Mike Merrill, Daniel Paleka, Lluís Pastor Pérez, Yangjun Ruan, Aashay Sachdeva, Shiven Sinha, Ross Taylor, Pratyush Tiwari, Saujas Vaduguru, and Joan Velja for helpful discussions and feedback.

\bibliographystyle{colm2026_conference}
\bibliography{references}

@inproceedings{
chandak2026curating,
title={Curating the Future: A Scalable Recipe for Training Open-Ended Forecasters},
author={Nikhil Chandak and Shashwat Goel and Ameya Prabhu and Moritz Hardt and Jonas Geiping},
booktitle={ICML},
year={2026},
url={https://openreview.net/forum?id=SiMYGtHfxT}
}

@inproceedings{
paleka2026pitfalls,
title={Pitfalls in Evaluating Language Model Forecasters},
author={Daniel Paleka and Shashwat Goel and Jonas Geiping and Florian Tram{\`e}r},
booktitle={ICLR},
year={2026},
url={https://openreview.net/forum?id=z85kARAoyD}
}

@misc{chandak2025answermatchingoutperformsmultiple,
      title={Answer Matching Outperforms Multiple Choice for Language Model Evaluation}, 
      author={Nikhil Chandak and Shashwat Goel and Ameya Prabhu and Moritz Hardt and Jonas Geiping},
      year={2025},
      eprint={2507.02856},
      archivePrefix={arXiv},
      primaryClass={cs.CL},
      url={https://arxiv.org/abs/2507.02856}, 
}

@misc{mucsanyi2023proper,
  author       = {B{\'a}lint Mucs{\'a}nyi and Michael Kirchhof and Elisa Nguyen and Alexander Rubinstein and Seong Joon Oh},
  title        = {Proper/Strictly Proper Scoring Rule},
  year         = {2023},
  booktitle    = {Trustworthy Machine Learning},
  url          = {https://trustworthyml.io/},
  doi          = {10.48550/arXiv.2310.08215},
  eprint       = {2310.08215},
  archivePrefix = {arXiv}
}

@inproceedings{
yang2026llmasaprophet,
title={{LLM}-as-a-Prophet: Understanding Predictive Intelligence with Prophet Arena},
author={Qingchuan Yang and Simon Mahns and Sida Li and Anri Gu and Jibang Wu and Haifeng Xu},
booktitle={ICLR},
year={2026},
url={https://openreview.net/forum?id=VpiHkMSPqI}
}

@misc{foundation2026arcagi3newchallengefrontier,
      title={ARC-AGI-3: A New Challenge for Frontier Agentic Intelligence}, 
      author={{ARC Prize Foundation}},
      year={2026},
      eprint={2603.24621},
      archivePrefix={arXiv},
      primaryClass={cs.AI},
      url={https://arxiv.org/abs/2603.24621}, 
}

@inproceedings{
sinha2026the,
title={The Illusion of Diminishing Returns: Measuring Long Horizon Execution in {LLM}s},
author={Akshit Sinha and Arvindh Arun and Shashwat Goel and Steffen Staab and Jonas Geiping},
booktitle={The Fourteenth International Conference on Learning Representations},
year={2026},
url={https://openreview.net/forum?id=3lm8lWYxiq}
}

@misc{backlund2025vendingbenchbenchmarklongtermcoherence,
      title={Vending-Bench: A Benchmark for Long-Term Coherence of Autonomous Agents}, 
      author={Axel Backlund and others},
      year={2025},
      eprint={2502.15840},
      archivePrefix={arXiv},
      primaryClass={cs.AI},
      url={https://arxiv.org/abs/2502.15840}, 
}

@article{atanasov2020small,
  title   = {Small Steps to Accuracy: Incremental Belief Updaters Are Better Forecasters},
  author  = {Atanasov, Pavel and Witkowski, Jens and Ungar, Lyle and Mellers, Barbara and Tetlock, Philip},
  journal = {Organizational Behavior and Human Decision Processes},
  volume  = {160},
  pages   = {19--35},
  year    = {2020},
  doi     = {10.1016/j.obhdp.2020.02.001}
}

@article{mellers2015identifying,
  title   = {Identifying and Cultivating Superforecasters as a Method of Improving Probabilistic Predictions},
  author  = {Mellers, Barbara and Stone, Eric and Murray, Terry and Minster, Angela and Rohrbaugh, Nick and Bishop, Michael and Chen, Eva and Baker, Joshua and Hou, Yuan and Horowitz, Michael and Ungar, Lyle and Tetlock, Philip},
  journal = {Perspectives on Psychological Science},
  volume  = {10},
  number  = {3},
  pages   = {267--281},
  year    = {2015},
  doi     = {10.1177/1745691615577794}
}

@misc{zhang2026recursivelanguagemodels,
      title={Recursive Language Models}, 
      author={Alex L. Zhang and Tim Kraska and Omar Khattab},
      year={2026},
      eprint={2512.24601},
      archivePrefix={arXiv},
      primaryClass={cs.AI},
      url={https://arxiv.org/abs/2512.24601}, 
}

@inproceedings{
hardt2024testtime,
title={Test-Time Training on Nearest Neighbors for Large Language Models},
author={Moritz Hardt and Yu Sun},
booktitle={The Twelfth International Conference on Learning Representations},
year={2024},
url={https://openreview.net/forum?id=CNL2bku4ra}
}

@inproceedings{
yang2025synthetic,
title={Synthetic continued pretraining},
author={Zitong Yang and Neil Band and Shuangping Li and Emmanuel Candes and Tatsunori Hashimoto},
booktitle={The Thirteenth International Conference on Learning Representations},
year={2025},
url={https://openreview.net/forum?id=07yvxWDSla}
}

@inproceedings{
wang2025openhands,
title={OpenHands: An Open Platform for {AI} Software Developers as Generalist Agents},
author={Xingyao Wang and Boxuan Li and Yufan Song and Frank F. Xu and Xiangru Tang and Mingchen Zhuge and Jiayi Pan and Yueqi Song and Bowen Li and Jaskirat Singh and Hoang H. Tran and Fuqiang Li and Ren Ma and Mingzhang Zheng and Bill Qian and Yanjun Shao and Niklas Muennighoff and Yizhe Zhang and Binyuan Hui and Junyang Lin and Robert Brennan and Hao Peng and Heng Ji and Graham Neubig},
booktitle={The Thirteenth International Conference on Learning Representations},
year={2025},
url={https://openreview.net/forum?id=OJd3ayDDoF}
}

@misc{zhang2026hyperagents,
      title={Hyperagents}, 
      author={Jenny Zhang and Bingchen Zhao and Wannan Yang and Jakob Foerster and Jeff Clune and Minqi Jiang and Sam Devlin and Tatiana Shavrina},
      year={2026},
      eprint={2603.19461},
      archivePrefix={arXiv},
      primaryClass={cs.AI},
      url={https://arxiv.org/abs/2603.19461}, 
}

@misc{lou2026autoharnessimprovingllmagents,
      title={AutoHarness: improving LLM agents by automatically synthesizing a code harness}, 
      author={Xinghua Lou and Miguel Lázaro-Gredilla and Antoine Dedieu and Carter Wendelken and Wolfgang Lehrach and Kevin P. Murphy},
      year={2026},
      eprint={2603.03329},
      archivePrefix={arXiv},
      primaryClass={cs.CL},
      url={https://arxiv.org/abs/2603.03329}, 
}

@article{shao2025reasonir,
      title={ReasonIR: Training Retrievers for Reasoning Tasks}, 
      author={Rulin Shao and Rui Qiao and Varsha Kishore and Niklas Muennighoff and Xi Victoria Lin and Daniela Rus and Bryan Kian Hsiang Low and Sewon Min and Wen-tau Yih and Pang Wei Koh and Luke Zettlemoyer},
      year={2025},
      journal={arXiv preprint arXiv:2504.20595},
      url={https://arxiv.org/abs/2504.20595}, 
}

@misc{khattab2020colbertefficienteffectivepassage,
      title={ColBERT: Efficient and Effective Passage Search via Contextualized Late Interaction over BERT}, 
      author={Omar Khattab and Matei Zaharia},
      year={2020},
      eprint={2004.12832},
      archivePrefix={arXiv},
      primaryClass={cs.IR},
      url={https://arxiv.org/abs/2004.12832}, 
}

@misc{murphy2026agenticforecastingusingsequential,
      title={Agentic Forecasting using Sequential Bayesian Updating of Linguistic Beliefs}, 
      author={Kevin Murphy},
      year={2026},
      eprint={2604.18576},
      archivePrefix={arXiv},
      primaryClass={cs.AI},
      url={https://arxiv.org/abs/2604.18576}, 
}

@misc{silver2025eraexperience,
  title        = {Welcome to the Era of Experience},
  author       = {Silver, David and Sutton, Richard S.},
  year         = {2025},
  note         = {Preprint of a chapter to appear in \emph{Designing an Intelligence}, MIT Press},
  url          = {https://storage.googleapis.com/deepmind-media/Era-of-Experience%20/The%20Era%20of%20Experience%20Paper.pdf}
}

@inproceedings{
snell2025scaling,
title={Scaling {LLM} Test-Time Compute Optimally Can be More Effective than Scaling Parameters for Reasoning},
author={Charlie Victor Snell and Jaehoon Lee and Kelvin Xu and Aviral Kumar},
booktitle={The Thirteenth International Conference on Learning Representations},
year={2025},
url={https://openreview.net/forum?id=4FWAwZtd2n}
}

@misc{venkatraman2026recursiveselfaggregationunlocksdeep,
      title={Recursive Self-Aggregation Unlocks Deep Thinking in Large Language Models}, 
      author={Siddarth Venkatraman and Vineet Jain and Sarthak Mittal and Vedant Shah and Johan Obando-Ceron and Yoshua Bengio and Brian R. Bartoldson and Bhavya Kailkhura and Guillaume Lajoie and Glen Berseth and Nikolay Malkin and Moksh Jain},
      year={2026},
      eprint={2509.26626},
      archivePrefix={arXiv},
      primaryClass={cs.LG},
      url={https://arxiv.org/abs/2509.26626}, 
}

@misc{tran2026singleagentllmsoutperformmultiagent,
      title={Single-Agent LLMs Outperform Multi-Agent Systems on Multi-Hop Reasoning Under Equal Thinking Token Budgets}, 
      author={Dat Tran and Douwe Kiela},
      year={2026},
      eprint={2604.02460},
      archivePrefix={arXiv},
      primaryClass={cs.CL},
      url={https://arxiv.org/abs/2604.02460}, 
}

@misc{shen2025thinkingvsdoingagents,
      title={Thinking vs. Doing: Agents that Reason by Scaling Test-Time Interaction}, 
      author={Junhong Shen and Hao Bai and Lunjun Zhang and Yifei Zhou and Amrith Setlur and Shengbang Tong and Diego Caples and Nan Jiang and Tong Zhang and Ameet Talwalkar and Aviral Kumar},
      year={2025},
      eprint={2506.07976},
      archivePrefix={arXiv},
      primaryClass={cs.LG},
      url={https://arxiv.org/abs/2506.07976}, 
}

@article{silver2018general,
  title={A general reinforcement learning algorithm that masters chess, shogi, and Go through self-play},
  author={Silver, David and Hubert, Thomas and Schrittwieser, Julian and Antonoglou, Ioannis and Lai, Matthew and Guez, Arthur and Lanctot, Marc and Sifre, Laurent and Kumaran, Dharshan and Graepel, Thore and others},
  journal={Science},
  volume={362},
  number={6419},
  pages={1140--1144},
  year={2018},
  publisher={American Association for the Advancement of Science}
}

@inproceedings{
bansal2018emergent,
title={Emergent Complexity via Multi-Agent Competition},
author={Trapit Bansal and Jakub Pachocki and Szymon Sidor and Ilya Sutskever and Igor Mordatch},
booktitle={International Conference on Learning Representations},
year={2018},
url={https://openreview.net/forum?id=Sy0GnUxCb},
}

@inproceedings{park2023generative,
  title={Generative agents: Interactive simulacra of human behavior},
  author={Park, Joon Sung and O'Brien, Joseph C. and Cai, Carrie J. and Morris, Meredith Ringel and Liang, Percy and Bernstein, Michael S.},
  booktitle={Proceedings of the 36th Annual ACM Symposium on User Interface Software and Technology},
  year={2023}
}

@article{meta2022human,
  title={Human-level play in the game of diplomacy by combining language models with strategic reasoning},
  author={Brown, Noam and Bakhtin, Anton and Dinan, Emily and Farina, Gabriele and Flaherty, Colin and Fried, Daniel and Goff, Andrew and Gray, Jonathan and Hu, Hengyuan and others},
  journal={Science},
  volume={378},
  number={6624},
  pages={1067--1074},
  year={2022},
  publisher={American Association for the Advancement of Science},
  note={Meta Fundamental AI Research Diplomacy Team (FAIR)}
}

@inproceedings{
paglieri2025balrog,
title={{BALROG}: Benchmarking Agentic {LLM} and {VLM} Reasoning On Games},
author={Davide Paglieri and Bart{\l}omiej Cupia{\l} and Samuel Coward and Ulyana Piterbarg and Maciej Wolczyk and Akbir Khan and Eduardo Pignatelli and {\L}ukasz Kuci{\'n}ski and Lerrel Pinto and Rob Fergus and Jakob Nicolaus Foerster and Jack Parker-Holder and Tim Rockt{\"a}schel},
booktitle={The Thirteenth International Conference on Learning Representations},
year={2025},
url={https://openreview.net/forum?id=fp6t3F669F}
}

@misc{he2026textttycbenchbenchmarkingaiagents,
      title={$\texttt{YC-Bench}$: Benchmarking AI Agents for Long-Term Planning and Consistent Execution}, 
      author={Muyu He and Adit Jain and Anand Kumar and Vincent Tu and Soumyadeep Bakshi and Sachin Patro and Nazneen Rajani},
      year={2026},
      eprint={2604.01212},
      archivePrefix={arXiv},
      primaryClass={cs.CL},
      url={https://arxiv.org/abs/2604.01212}, 
}

@misc{seshadri2026lostsimulationllmsimulatedusers,
      title={Lost in Simulation: LLM-Simulated Users are Unreliable Proxies for Human Users in Agentic Evaluations}, 
      author={Preethi Seshadri and Samuel Cahyawijaya and Ayomide Odumakinde and Sameer Singh and Seraphina Goldfarb-Tarrant},
      year={2026},
      eprint={2601.17087},
      archivePrefix={arXiv},
      primaryClass={cs.HC},
      url={https://arxiv.org/abs/2601.17087}, 
}

@misc{anthropic2025projectvend,
  author       = {{Anthropic}},
  title        = {{Project Vend: Can Claude Run a Small Shop? (And Why Does That Matter?)}},
  year         = {2025},
  month        = jun,
  day          = {27},
  howpublished = {\url{https://www.anthropic.com/research/project-vend-1}},
  note         = {Accessed: 2026-05-03}
}

@misc{halawi2024approachinghumanlevelforecastinglanguage,
      title={Approaching Human-Level Forecasting with Language Models}, 
      author={Danny Halawi and Fred Zhang and Chen Yueh-Han and Jacob Steinhardt},
      year={2024},
      eprint={2402.18563},
      archivePrefix={arXiv},
      primaryClass={cs.LG},
      url={https://arxiv.org/abs/2402.18563}, 
}

@misc{zhang2026predictionarenabenchmarkingai,
      title={Prediction Arena: Benchmarking AI Models on Real-World Prediction Markets}, 
      author={Jaden Zhang and Gardenia Liu and Oliver Johansson and Hileamlak Yitayew and Kamryn Ohly and Grace Li},
      year={2026},
      eprint={2604.07355},
      archivePrefix={arXiv},
      primaryClass={cs.LG},
      url={https://arxiv.org/abs/2604.07355}, 
}

@misc{grady2026kellybenchbenchmarklonghorizonsequential,
      title={KellyBench: A Benchmark for Long-Horizon Sequential Decision Making}, 
      author={Thomas Grady and Kip Parker and Iliyan Zarov and Henry Course and Chengxi Taylor and Ross Taylor},
      year={2026},
      eprint={2604.27865},
      archivePrefix={arXiv},
      primaryClass={cs.AI},
      url={https://arxiv.org/abs/2604.27865}, 
}

@article{ha2018worldmodels,
  doi = {10.5281/ZENODO.1207631},
  url = {https://zenodo.org/record/1207631},
  author = {Ha, David and Schmidhuber, Jürgen},
  title = {World Models},
  publisher = {Zenodo},
  year = {2018},
  copyright = {Creative Commons Attribution 4.0}
}

@inproceedings{
froger2026gaia,
title={Gaia2: Benchmarking {LLM} Agents on Dynamic and  Asynchronous Environments},
author={Romain Froger and Pierre Andrews and Matteo Bettini and Amar Budhiraja and Ricardo Silveira Cabral and Virginie Do and Emilien Garreau and Jean-Baptiste Gaya and Hugo Lauren{\c{c}}on and Maxime Lecanu and Kunal Malkan and Dheeraj Mekala and Pierre Menard and Gerard Moreno-Torres Bertran and Ulyana Piterbarg and Mikhail Plekhanov and Mathieu Rita and Andrey Rusakov and Vladislav Vorotilov and Mengjue Wang and Ian Yu and Amine Benhalloum and Gr{\'e}goire Mialon and Thomas Scialom},
booktitle={The Fourteenth International Conference on Learning Representations},
year={2026},
url={https://openreview.net/forum?id=9gw03JpKK4}
}

@inproceedings{
karger2025forecastbench,
title={ForecastBench: A Dynamic Benchmark of {AI} Forecasting Capabilities},
author={Ezra Karger and Houtan Bastani and Chen Yueh-Han and Zachary Jacobs and Danny Halawi and Fred Zhang and Philip Tetlock},
booktitle={The Thirteenth International Conference on Learning Representations},
year={2025},
url={https://openreview.net/forum?id=lfPkGWXLLf}
}

@article{WEBBY199691,
title = {Judgemental and statistical time series forecasting: a review of the literature},
journal = {International Journal of Forecasting},
volume = {12},
number = {1},
pages = {91-118},
year = {1996},
url = {https://www.sciencedirect.com/science/article/pii/0169207095006443},
author = {Richard Webby and Marcus O'Connor},
}

@misc{thai2026sweevobenchmarkingcodingagents,
      title={SWE-EVO: Benchmarking Coding Agents in Long-Horizon Software Evolution Scenarios}, 
      author={Minh V. T. Thai and Tue Le and Dung Nguyen Manh and Huy Phan Nhat and Nghi D. Q. Bui},
      year={2026},
      eprint={2512.18470},
      archivePrefix={arXiv},
      primaryClass={cs.SE},
      url={https://arxiv.org/abs/2512.18470}, 
}

@misc{shi2026tauknowledgeevaluatingconversationalagents,
      title={$\tau$-Knowledge: Evaluating Conversational Agents over Unstructured Knowledge}, 
      author={Quan Shi and Alexandra Zytek and Pedram Razavi and Karthik Narasimhan and Victor Barres},
      year={2026},
      eprint={2603.04370},
      archivePrefix={arXiv},
      primaryClass={cs.AI},
      url={https://arxiv.org/abs/2603.04370}, 
}

@misc{farquhar2019robustevaluationscontinuallearning,
      title={Towards Robust Evaluations of Continual Learning}, 
      author={Sebastian Farquhar and Yarin Gal},
      year={2019},
      eprint={1805.09733},
      archivePrefix={arXiv},
      primaryClass={stat.ML},
      url={https://arxiv.org/abs/1805.09733}, 
}

@inproceedings{bruce2024genie,
  title={Genie: Generative interactive environments},
  author={Bruce, Jake and Dennis, Michael D and Edwards, Ashley and Parker-Holder, Jack and Shi, Yuge and Hughes, Edward and Lai, Matthew and Mavalankar, Aditi and Steigerwald, Richie and Apps, Chris and others},
  booktitle={Forty-first International Conference on Machine Learning},
  year={2024}
}

@inproceedings{yang-etal-2018-hotpotqa,
  title = "{H}otpot{QA}: A Dataset for Diverse, Explainable Multi-hop Question Answering",
  author = "Yang, Zhilin and
    Qi, Peng and
    Zhang, Saizheng and
    Bengio, Yoshua and
    Cohen, William and
    Salakhutdinov, Ruslan and
    Manning, Christopher D.",
  booktitle = "EMNLP",
  year = "2018",
  url = "https://aclanthology.org/D18-1259/",
}

@misc{wei2025browsecomp,
  title = {BrowseComp: A Simple Yet Challenging Benchmark for Browsing Agents},
  author = {Wei, Jason and
    Sun, Zhiqing and
    Papay, Spencer and
    McKinney, Scott and
    Han, Jeffrey and
    Fulford, Isa and
    Chung, Hyung Won and
    Passos, Alex Tachard and
    Fedus, William and
    Glaese, Amelia},
  year = {2025},
  eprint = {2504.12516},
  archivePrefix = {arXiv},
  primaryClass = {cs.CL},
  doi = {10.48550/arXiv.2504.12516},
  url = {https://arxiv.org/abs/2504.12516}
}

@inproceedings{jin2025searchr1,
  title = {Search-{R1}: Training {LLM}s to Reason and Leverage Search Engines with Reinforcement Learning},
  author = {Jin, Bowen and
    Zeng, Hansi and
    Yue, Zhenrui and
    Yoon, Jinsung and
    Ar{\i}k, Sercan O. and
    Wang, Dong and
    Zamani, Hamed and
    Han, Jiawei},
  booktitle = {Proceedings of the 2nd Conference on Language Modeling},
  year = {2025},
  address = {Montreal, Canada},
  url = {https://openreview.net/forum?id=Rwhi91ideu}
}

@inproceedings{
damani2026beyond,
title={Beyond Binary Rewards: Training {LM}s to Reason About Their Uncertainty},
author={Mehul Damani and Isha Puri and Stewart Slocum and Idan Shenfeld and Leshem Choshen and Yoon Kim and Jacob Andreas},
booktitle={The Fourteenth International Conference on Learning Representations},
year={2026},
url={https://openreview.net/forum?id=ASQ649zdHm}
}

@online{nagel2016-ccnews,
  author       = {Sebastian Nagel},
  title        = {Common Crawl News Dataset},
  year         = {2016},
  url          = {https://data.commoncrawl.org/crawl-data/CC-NEWS/index.html},
  urldate      = {2025-09-01},
  organization = {Common Crawl}
}

@inproceedings{
lazaridou2021mind,
title={Mind the Gap: Assessing Temporal Generalization in Neural Language Models},
author={Angeliki Lazaridou and Adhiguna Kuncoro and Elena Gribovskaya and Devang Agrawal and Adam Liska and Tayfun Terzi and Mai Gimenez and Cyprien de Masson d'Autume and Tom{\'a}{\v{s}} Ko{\v{c}}isk{\'y} and Sebastian Ruder and Dani Yogatama and Kris Cao and Susannah Young and Phil Blunsom},
booktitle={Advances in Neural Information Processing Systems},
editor={A. Beygelzimer and Y. Dauphin and P. Liang and J. Wortman Vaughan},
year={2021},
url={https://openreview.net/forum?id=73OmmrCfSyy}
}

@inproceedings{yao2022wildtime,
 author = {Yao, Huaxiu and Choi, Caroline and Cao, Bochuan and Lee, Yoonho and Koh, Pang Wei and Finn, Chelsea},
 booktitle = {Advances in Neural Information Processing Systems},
 editor = {S. Koyejo and S. Mohamed and A. Agarwal and D. Belgrave and K. Cho and A. Oh},
 pages = {10309--10324},
 publisher = {Curran Associates, Inc.},
 title = {Wild-Time: A Benchmark of in-the-Wild Distribution Shift over Time},
 url = {https://proceedings.neurips.cc/paper_files/paper/2022/file/43119db5d59f07cc08fca7ba6820179a-Paper-Datasets_and_Benchmarks.pdf},
 volume = {35},
 year = {2022}
}

@InProceedings{pmlr-v119-perdomo20a,
  title = 	 {Performative Prediction},
  author =       {Perdomo, Juan and Zrnic, Tijana and Mendler-D{\"u}nner, Celestine and Hardt, Moritz},
  booktitle = 	 {Proceedings of the 37th International Conference on Machine Learning},
  pages = 	 {7599--7609},
  year = 	 {2020},
  editor = 	 {III, Hal Daumé and Singh, Aarti},
  volume = 	 {119},
  series = 	 {Proceedings of Machine Learning Research},
  month = 	 {13--18 Jul},
  publisher =    {PMLR},
  url = 	 {https://proceedings.mlr.press/v119/perdomo20a.html},
}

@misc{zhou2026mindsim2realgapuser,
      title={Mind the Sim2Real Gap in User Simulation for Agentic Tasks}, 
      author={Xuhui Zhou and Weiwei Sun and Qianou Ma and Yiqing Xie and Jiarui Liu and Weihua Du and Sean Welleck and Yiming Yang and Graham Neubig and Sherry Tongshuang Wu and Maarten Sap},
      year={2026},
      eprint={2603.11245},
      archivePrefix={arXiv},
      primaryClass={cs.AI},
      url={https://arxiv.org/abs/2603.11245}, 
}

@misc{vu2023freshllms,
      title={FreshLLMs: Refreshing Large Language Models with Search Engine Augmentation}, 
      author={Tu Vu and Mohit Iyyer and Xuezhi Wang and Noah Constant and Jerry Wei and Jason Wei and Chris Tar and Yun-Hsuan Sung and Denny Zhou and Quoc Le and Thang Luong},
      year={2023},
      eprint={2310.03214},
      archivePrefix={arXiv},
      primaryClass={cs.CL}
}
\clearpage
\appendix
\etocdepthtag.toc{appendix}
\section*{Appendix}
\crefalias{section}{appendix}
\crefalias{subsection}{appendix}
\crefalias{subsubsection}{appendix}

\etocsettagdepth{mainmatter}{none}
\etocsettagdepth{appendix}{subsection}
\etocsetnexttocdepth{subsection}
\etocsettocstyle{\subsection*{Contents}}{}
\tableofcontents

\clearpage

\crefalias{section}{appendix}
\crefalias{section}{appendix}
\section{Data}
\label{app:questiondetails}

\subsection{Question creation methodology}

\looseness -1 A natural way to obtain resolved forecasting questions is to reuse questions from prediction markets~\citep{yang2026llmasaprophet}. While providing useful grounding to questions humans made predictions on, recent work highlights limitations of using them to study forecasting~\citep{chandak2026curating, paleka2026pitfalls}: they cover a limited number of events, focus on sports, cryptocurrency, US politics, and entertainment~\citep{paleka2026pitfalls}, and mostly use binary yes/no or multiple-choice formats. In \futuresim\ , we instead take the more scalable approach of synthesizing short-answer forecasting questions from any (news) source document, introduced in ~\citet{chandak2026curating}. We also think this can help explore weight finetuning-based continual learning approaches on our benchmark, which might need more data to show advantages over inference-only counterparts.

Following~\cite{chandak2026curating}, the question generation pipeline takes timestamped articles as source documents and prompts an LLM to generate free-form short-answer questions whose answers are contained in the source articles. The question title, background, and resolution criteria are written as if the question were being asked before the answer is known. The pipeline then applies leakage checks, validates the question format, extracts and revises dates, filters invalid answer types, and removes questions that fail quality checks.

\paragraph{Our Data.} The current question set is generated from Al~Jazeera articles in the first quarter of 2026. We found this to be the highest quality source, which is freely accessible in the CCNews corpus, which covers a wide range of global events. We initiated our question generation from 10,000+ source articles, eventually narrowing down to just 330 high-quality questions ($3\%$).

\phantomsection
\label{app:sample-question}
\begin{tcolorbox}[
  colback=blue!5,
  colframe=blue!25,
  colbacktitle=blue!25,
  coltitle=black,
  boxrule=0.5pt,
  arc=1pt,
  left=4pt,
  right=4pt,
  top=3pt,
  bottom=3pt,
  title={Sample Forecasting Question},
  fonttitle=\bfseries,
  fontupper=\small,
]
\textbf{Question.} Who will be sworn in as Nepal's new prime minister?

\smallskip
\textbf{Resolution Date:} March 6, 2026

\textbf{Answer Type.} String (Name)

\textbf{Ground-Truth Answer.} Balendra Shah

\textbf{Source.} \href{https://www.aljazeera.com/features/2026/3/27/now-in-power-nepals-rapper-politician-balen-shah-faces-new-challenge}{Al~Jazeera}
\end{tcolorbox}

\paragraph{Additional Refinement.} The key changes from the pipeline in \cite{chandak2026curating} to improve data quality are as follows:
\begin{itemize}
    \item We create a test set resolving from January 1 to March 28, 2026, so the benchmark targets events after the knowledge cutoffs of the frontier models we evaluate.
    \item We revise each resolution date to the earliest date on which the answer could be inferred with certainty from public evidence. Our pipeline prompts models like GPT 5.5 with web-search to answer the question and identify the earliest inferable date. We remove questions where the model's answer is correct, and the inferred resolution date is before January 1, 2026. We provide the prompt in \Cref{app:prompt-late-resolve}.
    \item We remove questions that models can answer confidently using search capped to June 2025, as they are too stale or easy. We also remove questions that models still fail to answer even with full web search access as of April 2026, as this might be due to label noise. This approach makes our evaluation quite distinct from past factuality evaluations, which have been used to measure temporal adaptation~\cite{vu2023freshllms}.
\end{itemize}

\subsection{Example of a Discarded Question}
\label{app:discarded-question-example}

\begin{promptframe}{Discarded Forecasting Question}
\textbf{Question.} Which country will PAHO review alongside the United States at
its virtual meeting on measles elimination status on April 13, 2026?

\medskip
\textbf{Background.} In the Americas, countries can lose measles elimination
status if local transmission continues for too long. A regional health body is
monitoring countries whose status may be at risk.
\end{promptframe}

\begin{promptframe}{Response of GPT-5.5 with web search}
\textbf{Answer.} Mexico.

\medskip
\textbf{Earliest inferable date.} November 11, 2025.

\medskip
The official PAHO announcement on January 16, 2026 explicitly named the United
States and Mexico, but the answer was already strongly determined by public
evidence on November 11, 2025. By then, Canada had just lost its measles-free
status, PAHO's director had publicly discussed which countries were next, and
the United States and Mexico were the two remaining countries approaching the
12-month local-transmission threshold. The model concluded that the January
announcement was confirmation rather than the first date on which the answer
could be inferred.
\end{promptframe}

The example above illustrates why updating resolution dates is important, rather than using only the source article's date. At first sight, the question looks
like a valid April 2026 forecasting question: the official PAHO meeting was
announced for April 13, 2026, and the answer appeared explicitly in a January
2026 PAHO notice. However, a model with web search judged that the answer was
already inferable before our evaluation window.

\textbf{Final set of tasks.} Each retained question becomes one forecasting task in our environment. In the main experiments, all tasks are visible from the start of the simulation on December 24, 2025, and remain active until their resolution date. Agents may submit and revise forecasts while a question is active. Each submission is a probability distribution over up to five free-form outcomes.

\subsection{Question Distribution}

\paragraph{Topic Distribution.} Table~\ref{app:topic-distribution} summarizes the topic distribution of the 330 forecasting questions in \futuresim.

\begin{table}[t]
\centering
\small
\begin{tabular}{lrr}
\toprule
Topic & \# Questions & Fraction (\%) \\
\midrule
International Politics \& Diplomacy & 91 & 27.6 \\
Conflict \& War & 78 & 23.6 \\
Sports & 75 & 22.7 \\
Crime \& Justice & 26 & 7.9 \\
US Politics & 20 & 6.1 \\
Disasters \& Accidents & 14 & 4.2 \\
Other & 26 & 7.9 \\
\midrule
Total & 330 & 100 \\
\bottomrule
\end{tabular}
\vspace{0.2cm}
\caption{Topic distribution of the 330 forecasting questions in \futuresim. ``Other'' bundles the long tail of low-frequency topics (Technology \& AI, Entertainment \& Culture, Business \& Economy, Health \& Medicine, Climate \& Environment, Religion, Science \& Space).}
\label{app:topic-distribution}
\end{table}

\paragraph{Resolution distribution.} The questions resolve over 84 distinct calendar dates between January 1 and March 28, 2026. Of the 330 questions, 122 resolve in January, 92 in February, and 116 in March. 

\paragraph{Empirical difficulty.} We also summarize the difficulty of our question set by how often the final top prediction matches the resolved answer across the top four models, in our harness. By this measure, 229 questions are answered correctly by no run, 35 by exactly one run, 17 by exactly two runs, 16 by exactly three runs, and 33 by all four runs. Equivalently, 101 questions are solved by at least one run.

\subsection{Mitigating Leakage of Future Information}

Overall, we apply the following mitigations to prevent leakage of future information before it is available to agents in the simulation:

\paragraph{Tasks.} The filtering pipeline to ensure the created forecasting questions don't leak future information, and are not solvable before their resolution date is described in the previous section.

\paragraph{Context.} We sandbox agents, so they can only read articles from a folder maintained by the environment code, which adds only articles from the current date into the folder each time \texttt{next\_day()} is called by an agent. At the start of the simulation, all articles before the simulation's first date are stored in this folder. We do not provide the agent access to live web search. We provide implementation details of the sandbox in \Cref{app:sandboxdetails}.

Consistent with prior work, our attempts to use date filtering functionality in popular search APIs like Brave failed, with future information leaking through both updates to the articles and ranking indices~\citep{paleka2026pitfalls}. We do set up a local context repository of news articles from CCNews. CCNews captures an offline snapshot of the news articles, with the date it was taken. The hybrid search tool we provide to agents is based on LanceDB, which enables optimized date-range-controlled search over the articles. Inside it, we use the Qwen3 Embedding 8B model for semantic search, which finished training in mid-2025 and thus does not have access to recent information that could influence similarity scores and, consequently, retrieval rankings.

\crefalias{section}{appendix}
\section{Environment Details}
\label{app:environment-details}

\subsection{Our baseline harness}
\label{app:harness}
The motivation for our custom harness is that long-horizon forecasting tests agents on several frontiers at once: they must repeatedly revisit a large state, search a growing corpus, remember partially useful evidence over many days, and still leave enough context budget for reasoning and forecast submission. A minimal ``just give the model shell and tool access'' setup leaves ample performance on the table for open-weights models, especially, so our baseline harness adds lightweight structure intended to improve reliability without solving the task for the agent. In particular, it incorporates the following features:
\begin{enumerate}
    \item \textbf{Context consumption feedback:} After each tool call, the agent receives feedback about remaining context budget and approximate context occupancy. This is useful because the task spans thousands of turns, and without explicit budget awareness, agents often spend a lot of context browsing or performing repeated file reads, leaving too little room for final reasoning and forecast submission. We also clear the live interaction context each day and explicitly inform the agent about this, encouraging it to externalize useful information rather than assuming that important evidence will remain in-window indefinitely.
    \item \textbf{Structured memory tools:} Instead of asking the agent to maintain free-form notes arbitrarily in its workspace, we expose external memory through explicit tool calls with named entries and bounded fields. The goal is to make memory writing and retrieval deliberate actions rather than accidental byproducts of shell usage. This structure also makes it easier for the agent to store compact, queryable summaries of evidence, lessons from resolved questions, and reusable forecasting heuristics.
    \item \textbf{Per-question memory:} In addition to global notes, the harness maintains memory entries attached to individual questions. This is motivated by the fact that forecasting requires a mix of cross-question lessons and question-specific evidence: a general lesson about overconfidence should be stored differently from a candidate list or event-specific rationale for one market. Per-question memory helps agents revisit an active question after many simulated days without having to reconstruct all prior reasoning from scratch.
    \item \textbf{Forced memory phase:} When the agent ends a day, or when the context budget becomes too tight, the harness enters an explicit memory-update phase before actually advancing. During this phase, the agent is encouraged to compress what it learned into persistent notes and leave a cleaner state for the next day. The motivation is to prevent a common failure mode in long runs where agents defer summarization until too late, lose transient evidence to compaction, and then repeatedly rediscover the same information.
    \item \textbf{Procedural forecasting scaffolding:} The prompt encourages a concrete workflow: inspect the active questions, prioritize the ones most worth updating, search for relevant evidence, submit forecasts, update memory, and only then proceed to the next day. This scaffolding is intentionally lightweight: it does not tell the model what the answer is, but it does reduce dithering and helps it allocate attention to imminent resolutions, stale forecasts, and questions where new evidence is most likely to matter. In our experience, this kind of process guidance is especially important for getting models to repeatedly revise forecasts rather than treating an early answer as final.
\end{enumerate}

\subsection{Simulation logic} 
The simulation progresses in discrete time steps. At the beginning of the simulation, the agents are initialized with a prompt specifying the task, the scoring rules, the available tools and context, and the active questions; the native harness prompt is provided in \Cref{app:prompt-native-harness}, with additional prompt variants in \Cref{app:prompts}. At each timestep, the agent can observe and interact with the environment via MCP tool calls, following the OpenReward Standard: \url{https://openrewardstandard.io/}. The agents can observe the currently active questions, search over the latest news corpus, and make predictions. When the agent wants to proceed to the next timestep, it can call the \texttt{next\_day} action, which advances the environment by one timestep and updates the state accordingly. The state update reflects the new simulation date, the number of newly available articles, the number of active questions, and any questions that resolved on that date. For each resolved prediction, the environment exposes the ground-truth outcome and the score the agent accrued, in chronological order, which the agents can use to calibrate later predictions and learn new strategies. The simulation terminates after the last question resolves, at which point the final metrics are computed.

\subsection{Sandboxing agents to avoid contamination} 
\label{app:sandboxdetails}
To ensure agents cannot access future information, we sandbox them carefully using \texttt{bwrap} on a Linux server. Each harness runs inside a sandbox, an isolated process with its own private filesystem view and with controlled network access, to prevent leakage of unintended post-cutoff information. We ensure (i) \emph{No live web search}: the harness has no direct network access, and the only access to external links is LLM provider endpoints used to run the model. \texttt{WebSearch}, \texttt{WebFetch}, and any commands like \texttt{curl} to any other endpoints are blocked. (ii) \emph{Date-gated article corpus}: only the \texttt{articles/YYYY/MM/DD/} directories up to the current simulation date are exposed inside the sandbox, and the \texttt{search\_news} MCP tool automatically caps its \texttt{to\_date} so the underlying index cannot return future-dated articles. (iii) \emph{Read-only environment state}: \texttt{state.csv} containing the questions, their metadata, and the agent's forecasts is read-only, and the environment codebase, dataset, and other run directories are not visible from inside the sandbox at all. The agent has read-write access only to its own sandboxed workspace.

\subsection{Cost}
\label{app:cost}

Our experiments use a mix of direct API usage for querying the models and a local retrieval infrastructure. As frontier models like GPT 5.5 and Opus 4.6 are very expensive to run for long evaluations, we evaluated their agents through their providers' coding plans, each costing roughly $\$220$ per month. We also use a similar coding plan for GLM 5.1. These prices are not directly comparable to token-metered API calls, but we believe the equivalent cost of running the same long-horizon experiments through the API would be substantially higher, potentially on the order of $10\times$ more for GPT 5.5 and Opus 4.6. For models run through metered API inference, DeepSeek V4 Pro costs about $\$50$ per simulation run, while Qwen3.6 Plus through OpenRouter costs about $\$150$ per run. In addition, each run requires answer matching over many candidate outcomes, for which we use DeepSeek V3.2 for this step as it is cheap and reliable for answer matching~\citep{chandak2025answermatchingoutperformsmultiple}, costing less than $\$50$ for over 10{,}000 queries in a full run. Finally, our retrieval setup requires hosting Qwen3 Embedding 8B, used by the LanceDB search pipeline, on a single A100 or H100 GPU. These infrastructure costs are modest relative to the main agent runs for closed models, but they are part of the full end-to-end cost of reproducing our experiments.

\subsection{Context Details: CCNews}
\label{app:contextdetails}

The external context in \futuresim\ is a dated snapshot of Common Crawl News (CCNews), which lets us replay what evidence was available on each simulated day without relying on live web access. We first convert the raw crawl into a deduplicated article corpus, preserving article text, source, URL, and publication date. At runtime, agents interact only with the portion of this corpus whose dates are on or before the current simulation day.

In our experiments, agents access the corpus through a local LanceDB-backed retrieval tool rather than unrestricted browsing. Articles are split into 512-token text chunks, embedded with Qwen3 Embedding 8B, and indexed for hybrid retrieval that combines semantic search and keyword matching. The search interface also supports explicit date bounds, so both the folder with terminal command access and the retrieval tool have the same temporal cutoff. We treat this context store as part of the environment rather than part of any particular agent design, making \futuresim\ compatible with different retrieval, context-management, and memory strategies as long as they respect the date restrictions.

\paragraph{Leakage issues with Brave Search API.}
Search APIs such as Brave can be useful for retrieving news and other evidence, but we found that their date filters are not sufficient for leakage-free simulation. In one audit case, a question from our test-set asked: ``Which country will the silver medalist in the women’s downhill alpine skiing event at the Milan-Cortina Winter Olympics represent by 8 February 2026?'' The question's resolution date was February 7, 2026 with ground truth Germany. In one run of our simulation using Brave API, we found that on simulated January 30, 2026, the agent queried Brave for 2026 Winter Olympics women's downhill favorites and silver-medal country with \texttt{from\_date=2026-01-01} and \texttt{to\_date=2026-01-30}. Brave returned an Olympics Wiki/Fandom result dated January 9, 2026 whose snippet already said the event was held on February 8 and that Breezy Johnson/United States won gold, Emma Aicher/Germany won silver, and Sofia Goggia/Italy won bronze. 
This directly revealed the answer before the simulated agent should have had access to the result, motivating our use of a locally frozen, date-gated CCNews corpus instead of relying on external search APIs.

\subsection{\Cref{tab:benchmark-comparison} Columns}
\label{app:tableexplanation}
We mark benchmarks as \textit{open-ended}, in which the agent must decide what to do rather than being given a single, fully defined task. Benchmarks that can benefit from sequentially learning across tasks are marked as satisfying \textit{testing adaptation}. We estimate \textit{horizon lengths} as the maximum actions used in a single model trajectory reported in the original papers, with N/R = not reported.

\crefalias{section}{appendix}
\section{Metrics}
\label{app:metrics}

Following the notation from \Cref{sec:futuresim}, for a question $q\in\mathcal{Q}$, the ground-truth answer is $y_q$, and an agent forecast is a set of outcomes $\Omega_q=\{o_1,\ldots,o_k\}$ with probabilities $p_q(o_i)$ satisfying $p_q(o_i)\geq 0$ and $\sum_i p_q(o_i)\leq 1$. We assume all the outcomes in $\Omega_q$ are (semantically) distinct from each other. We set $p_q(o)=0$ for outcomes not named in $\Omega_q$. In our implementation, an answer matcher checks an outcome's semantic equality to $y_q$. As described in~\Cref{sec:futuresim}, the score for a resolved question is defined as:
\[
\mathrm{BSS}(q)
=
1 -
\sum_{o \in \Omega_q \cup \{y_q\}}
\left(p_q(o) - \mathbf{1}[o = y_q]\right)^2
\]
Higher is better: a fully confident correct forecast receives $1$, an abstention with no probability mass receives $0$, and assigning all the probability to wrong outcomes receives $-1$. The CSV maintaining the tasks also contains a daily-updating \texttt{avg\_brier} column, which shows the mean of $\mathrm{BSS}(q)$ over all questions, with questions that have no forecast contributing $0$. Once a question resolves, its final score is held fixed in later daily aggregates.

\subsection{Worked Example}
\label{app:bss-example}

To illustrate how the Brier skill score behaves in practice, consider the question \textit{``Which NFL team will win Super Bowl LX in February 2026?''} with ground-truth answer $y_q = \text{Seattle Seahawks}$. We compute the score for two hypothetical forecasts.

\paragraph{Forecast A (3 outcomes, positive BSS).}
The agent reports the outcome set
\[
\Omega_q^A = \{\text{Seattle Seahawks}: 0.55,\ \text{Chiefs}: 0.25,\ \text{49ers}: 0.10\},
\]
which sums to $0.90$. Since $y_q = \text{Seattle Seahawks}\in\Omega_q^A$, we have $\Omega_q^A\cup\{y_q\}=\Omega_q^A$, and
\[
\begin{aligned}
\mathrm{BSS}(q)
&= 1 - (0.55 - 1)^2 - (0.25 - 0)^2 - (0.10 - 0)^2 \\
&= 1 - 0.2025 - 0.0625 - 0.01 \\
&= 0.725.
\end{aligned}
\]
The agent confidently named the correct answer at the top of its prediction set and assigned only modest mass to plausible alternatives, earning a high positive score.

\paragraph{Forecast B (4 outcomes, negative BSS).}
Now consider a forecast that still names the correct outcome but commits most of its probability to a confident wrong guess:
\[
\Omega_q^B = \{\text{Chiefs}: 0.55,\ \text{Bills}: 0.20,\ \text{Seattle Seahawks}: 0.15,\ \text{49ers}: 0.10\},
\]
which sums to $1.00$. The truth (Seattle Seahawks) is included in $\Omega_q^B$, so an evaluator that only checked whether the right answer appeared anywhere in the prediction might consider this forecast to ``have the answer''. However, the agent's top-1 outcome is Chiefs, not the Seattle Seahawks. The score is
\[
\begin{aligned}
\mathrm{BSS}(q)
&= 1 - (0.55 - 0)^2 - (0.20 - 0)^2 - (0.15 - 1)^2 - (0.10 - 0)^2 \\
&= 1 - 0.3025 - 0.04 - 0.7225 - 0.01 \\
&= -0.075.
\end{aligned}
\]
Despite listing the correct team, the Brier skill score is negative: the dominant penalty $(0.15-1)^2 = 0.7225$ comes from leaving the Seattle Seahawks with only $0.15$ probability while committing $0.55$ to Chiefs, and the additional hedges on Bills and the 49ers each contribute small but strictly positive squared-error terms. %
Thus, the Brier skill score rewards correct \emph{and} well-allocated probabilities, and penalizes confident commitment to a wrong outcome more strongly than abstention (which would lead to a score of $0$). %

\subsection{Properness of Brier Skill Score}
\label{app:multibrier}

\begin{theorem}[Properness of Brier skill score for subdistributions]
\label{lem:bss-subdistribution-proper}
Fix a question $q$, and suppress the subscript $q$. Let $\Omega$ be the reported
outcome set, and let $\Omega'$ be the outcome set on which the forecaster's true
belief is defined. Suppose the answer matcher induces a partial one-to-one
matching between $\Omega$ and $\Omega'$: each $o\in\Omega$ matches at most one
$y\in\Omega'$, and each $y\in\Omega'$ matches at most one $o\in\Omega$.

Let the report be a subdistribution
\[
p:\Omega\to[0,1],
\qquad
\sum_{o\in\Omega}p(o)\leq 1,
\]
and let the true belief be a subdistribution
\[
\pi':\Omega'\to[0,1],
\qquad
\sum_{y\in\Omega'}\pi'(y)\leq 1.
\]
Any missing probability mass is treated as mass on outcomes that match no
reported outcome.

Define the projection $\bar{\pi}$ of the true belief onto $\Omega$ by
\[
\bar{\pi}(o)
=
\begin{cases}
\pi'(y), & \text{if there is a unique } y\in\Omega' \text{ such that } y\sim o,\\
0, & \text{otherwise.}
\end{cases}
\]
Then, before knowing which outcome will occur, the Brier skill score is strictly proper with respect to $\bar{\pi}$:
\[
\mathbb{E}[\mathrm{BSS}(p,Y)]
\leq
\mathbb{E}[\mathrm{BSS}(\bar{\pi},Y)],
\]
with equality if and only if
\[
p=\bar{\pi}.
\]
\end{theorem}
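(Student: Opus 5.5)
The plan is to compute the expected score in closed form as a function of the report $p$, and then show it equals a constant minus the squared Euclidean distance between $p$ and $\bar{\pi}$ on $\Omega$. First I would rewrite the score for a realized outcome $Y$. If $Y$ matches some $o^\ast\in\Omega$, the outcome set $\Omega\cup\{Y\}$ collapses to $\Omega$, and
\[
\mathrm{BSS}(p,Y)=1-\sum_{o\in\Omega}p(o)^2+2p(o^\ast)-1=2p(o^\ast)-\sum_{o\in\Omega}p(o)^2 .
\]
If $Y$ matches no reported outcome, the extra term $(0-1)^2$ appears for $Y$ itself, giving $\mathrm{BSS}(p,Y)=-\sum_{o\in\Omega}p(o)^2$. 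Both cases are captured by the single formula
\[
\mathrm{BSS}(p,Y)=2\sum_{o\in\Omega}p(o)\,\mathbf{1}[Y\sim o]-\sum_{o\in\Omega}p(o)^2 .
\]
This relies on the partial one-to-one assumption: at most one $o$ matches $Y$, and the reported outcomes are distinct.

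Next I would take expectations under the true belief. The outcome $Y$ ranges over $\Omega'$ with masses $\pi'$. Any leftover mass $1-\sum\pi'$ is, by hypothesis, placed on outcomes matching nothing in $\Omega$. For each $o\in\Omega$, the injectivity of the matching gives $\Pr[Y\sim o]=\bar{\pi}(o)$. Indeed, $o$ matches at most one $y\in\Omega'$, and unmatched or leftover mass contributes nothing. By linearity,
\[
\mathbb{E}[\mathrm{BSS}(p,Y)]=2\sum_{o\in\Omega}p(o)\bar{\pi}(o)-\sum_{o\in\Omega}p(o)^2=\sum_{o\in\Omega}\bar{\pi}(o)^2-\sum_{o\in\Omega}\bigl(p(o)-\bar{\pi}(o)\bigr)^2 .
\]

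Finally I would note that $\bar{\pi}$ is itself a valid report. It takes values in $[0,1]$ on $\Omega$, and it sums to at most $\sum\pi'\le 1$, again because the matching is injective from $\Omega'$ into $\Omega$. So $\mathbb{E}[\mathrm{BSS}(\bar{\pi},Y)]=\sum_o\bar{\pi}(o)^2$. The claimed inequality then follows, and the gap is exactly $\sum_o(p(o)-\bar{\pi}(o))^2$. That gap vanishes if and only if $p=\bar{\pi}$ pointwise on $\Omega$, which gives strictness.

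The main obstacle is the bookkeeping in the first two steps rather than any inequality. I must check that the "$\cup\{y_q\}$" term is handled uniformly, whether or not the truth is matched. I must also check that the matching assumptions make $\Pr[Y\sim o]$ exactly $\bar{\pi}(o)$, with no double counting: two reported outcomes cannot share a true outcome, and vice versa. I would state these two uses of injectivity explicitly.
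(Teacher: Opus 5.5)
Your proposal is correct and follows essentially the same route as the paper: compute the score in the matched and unmatched cases, take expectations to get $\mathbb{E}[\mathrm{BSS}(p,Y)] = 2\sum_{o\in\Omega}\bar{\pi}(o)p(o) - \|p\|_2^2$, and complete the square to obtain $\|\bar{\pi}\|_2^2 - \|p-\bar{\pi}\|_2^2$. Your single indicator formula combined with linearity of expectation, together with the check that $\bar{\pi}$ is itself a valid report, is the same computation as the paper's case split weighted by $\bar{\pi}(o)$ and the residual mass $r = 1-\sum_{o}\bar{\pi}(o)$, just written more compactly.
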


\begin{proof}
Since $\pi'$ is a subdistribution and $\bar{\pi}$ is its projection onto
$\Omega$,
\[
\sum_{o\in\Omega}\bar{\pi}(o)\leq 1.
\]
Let
\[
r
=
1-\sum_{o\in\Omega}\bar{\pi}(o).
\]
Thus $r$ is the total probability that the realized answer matches no reported
outcome.

For any report $p$, write
\[
\|p\|_2^2
=
\sum_{o\in\Omega}p(o)^2.
\]
If the realized answer matches some $o\in\Omega$, then
\[
\mathrm{BSS}(p,Y)
=
1-\left((p(o)-1)^2+\sum_{a\in\Omega:\,a\neq o}p(a)^2\right)
=
2p(o)-\|p\|_2^2.
\]
If the realized answer matches no reported outcome, then
\[
\mathrm{BSS}(p,Y)
=
1-\left(1+\sum_{a\in\Omega}p(a)^2\right)
=
-\|p\|_2^2.
\]
Therefore,
\[
\begin{aligned}
\mathbb{E}[\mathrm{BSS}(p,Y)]
&=
\sum_{o\in\Omega}
\bar{\pi}(o)\left(2p(o)-\|p\|_2^2\right)
+
r\left(-\|p\|_2^2\right) \\
&=
2\sum_{o\in\Omega}\bar{\pi}(o)p(o)
-
\left(\sum_{o\in\Omega}\bar{\pi}(o)+r\right)\|p\|_2^2 \\
&=
2\sum_{o\in\Omega}\bar{\pi}(o)p(o)-\|p\|_2^2.
\end{aligned}
\]
Using the identity
\[
\|p-\bar{\pi}\|_2^2
=
\|p\|_2^2
-
2\sum_{o\in\Omega}\bar{\pi}(o)p(o)
+
\|\bar{\pi}\|_2^2,
\]
we get
\[
\mathbb{E}[\mathrm{BSS}(p,Y)]
=
\|\bar{\pi}\|_2^2-\|p-\bar{\pi}\|_2^2.
\]
The term $\|\bar{\pi}\|_2^2$ does not depend on $p$, while
\[
\|p-\bar{\pi}\|_2^2\geq 0,
\]
with equality if and only if $p=\bar{\pi}$. Hence the expected score is uniquely
maximized by reporting $p=\bar{\pi}$.
\end{proof}

\subsection{Time-Weighted and Peer Scores}
\label{app:timeweightedmetrics}

Let $p_{q,t}$ denote the forecast for question $q$ that an agent is holding on day $t$, using the same notation as above at each snapshot. Let $T_q=\{o_q,\ldots,r_q-1\}$ be the days on which question $q$ is open, and let $\mathrm{BSS}_t(q)$ be the Brier skill score of $p_{q,t}$, or $0$ if no forecast has yet been submitted. The single-agent time-weighted score is defined as:
\[
\mathrm{TW}
=
100
\sum_{q\in\mathcal{Q}}
\frac{1}{|T_q|}
\sum_{t\in T_q}
\mathrm{BSS}_t(q)
\]
This rewards positive brier forecasts that are made earlier (and also penalizes negative forecasts made earlier), as a forecast earns its score on every day it is held until it is updated or the question resolves.

In multi-agent runs, each daily Brier skill score is first made peer-relative. For agent $a$, we define:
\[
\mathrm{Peer}_{a,t}(q)
=
\mathrm{BSS}_{a,t}(q)
-
\overline{\mathrm{BSS}}_{-a,t}(q)
\]
where $\overline{\mathrm{BSS}}_{-a,t}(q)$ is the average score of the other agents with active forecasts on $q$ at day $t$; if there is no other active forecast, the baseline is $0$. The time-weighted peer score applies the same single $100$ multiplier as the single-agent time-weighted score:
\[
\mathrm{TWPeer}_a
=
100
\sum_{q\in\mathcal{Q}}
\frac{1}{|T_q|}
\sum_{t\in T_q}
\mathrm{Peer}_{a,t}(q)
\]

\subsection{Aggregate Predictions and Update Size}
\label{app:multiagentaggtv}

When multiple agents forecast the same question, the market aggregate exposed in the environment is the coordinate-wise mean of their current probabilities:
\[
\bar p_q(o)
=
\frac{1}{n_q}
\sum_{a=1}^{n_q} p_q^{(a)}(o)
\]
where missing outcomes are assigned a probability of $0$.

For two forecasts $p_q$ and $p'_q$ with outcome sets $\Omega_q$ and $\Omega'_q$, the total variation distance is given by:
\[
d_{\mathrm{TV}}(p_q,p'_q)
=
\frac{1}{2}
\sum_{o\in\Omega_q\cup\Omega'_q}
\left|p_q(o)-p'_q(o)\right|
\]
We report this metric when studying multi-agent dynamics in \Cref{app:multiagentperformance}.

\subsection{Variance Computation}
\label{app:variancecomputation}

For time-series plots, the error bands are computed from question-level scores rather than from averaged daily scores. For each model, we take the intersection of the questions answered across runs. We estimate uncertainty by bootstrapping over both questions and runs. For each time step, we resample questions with replacement from the common question set. For each sampled question, we randomly select one run and use that run's score for the question on that date. We average these scores to obtain a single bootstrapped estimate. The shaded band in \Cref{fig:figure1} shows one standard deviation of these estimates around the mean (thick line).

\crefalias{section}{appendix}
\section{Additional experiments}
\label{app:additional-experiments}

\subsection{Environment Interaction}

\Cref{app:toolcalls} shows the number of actions taken in the environment across models. Note that we do not restrict the number of actions, allowing the models to run indefinitely with as many context window compactions as they want. In fact, we had to prompt the open-weight models separately to keep taking actions in our harness to achieve better results, because in initial runs they would just keep calling \texttt{next\_day}. GPT 5.5 achieves the best performance, but also take the largest number of actions in the environment. Similarly, Qwen 3.6 Plus takes the least actions and has worst performance. Yet, the number of actions doesn't fully explain performance, as while Opus 4.6 and DeepSeek V4 Pro use a similar number of actions, Opus 4.6 performs substantially better. Similarly with GLM 5.1 and Qwen3.6 Plus, where the former performs much better. It is however noticeable that the agents that benefited most from test-time adaptation in our ablation experiment (\Cref{fig:adaptation-memory}) consistently make more actions during the simulation in the main benchmark runs.

\begin{figure}[htbp]
    \centering
    \includegraphics[width=0.8\linewidth]{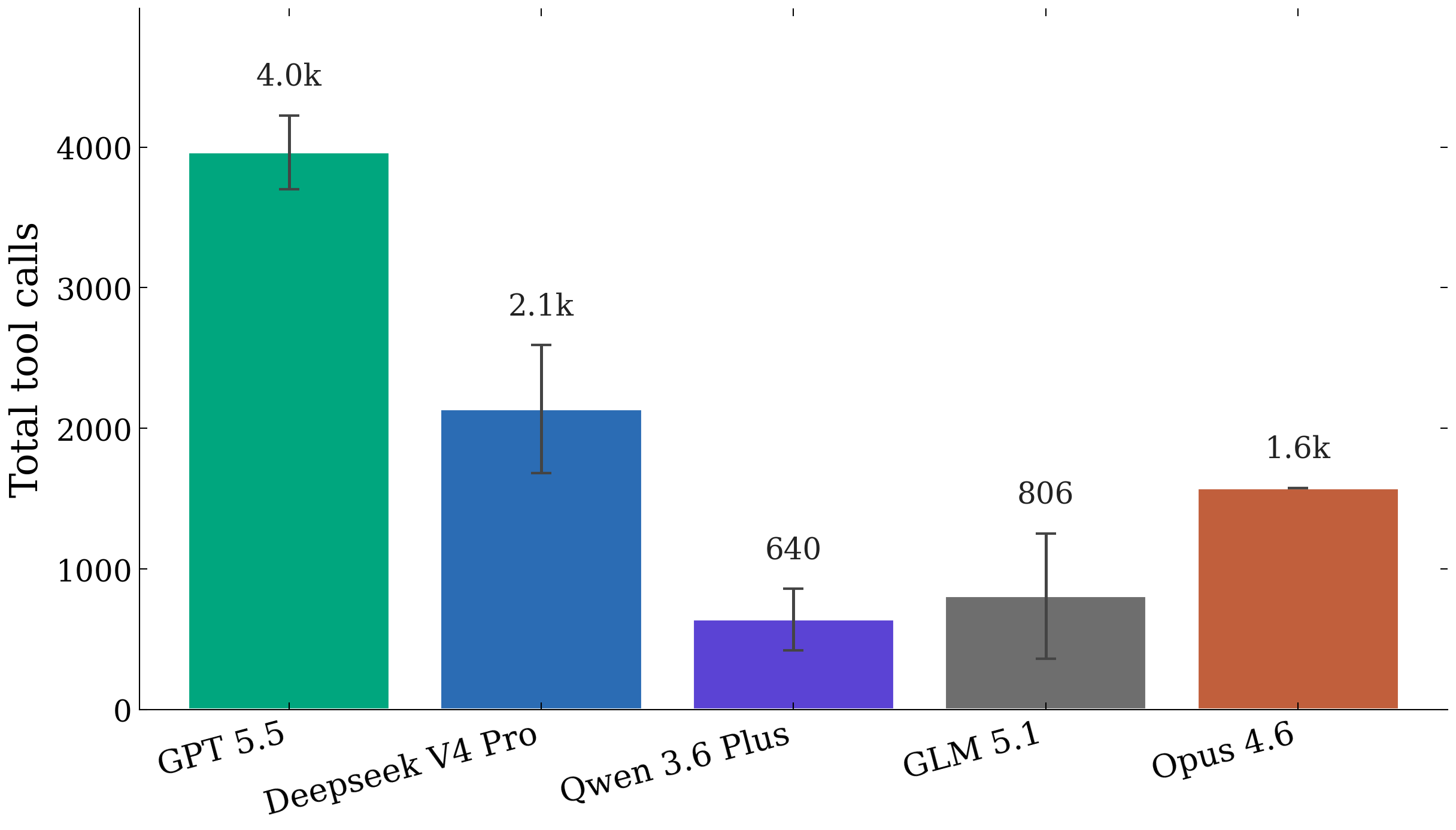}
    \caption{\textbf{Number of actions.} We report the total number of actions for each model during the simulation. The results show how long-horizon our benchmark is, with GPT 5.5 in Codex, the best performing agent taking around 4,000 actions across runs ranging over multiple context window compaction calls. We find that the number of actions taken by different agents is correlated with the test-time adaptation improvement reported in \Cref{fig:adaptation-memory}, but not always the absolute brier skill score.}
    \label{app:toolcalls}
\end{figure}

\subsection{Performance on Questions where all models update predictions}
\label{app:subset_questions}

A feature of our benchmark is that models can abstain from making any predictions on some questions, and in this case for that question they are assigned accuracy 0 and Brier skill score 0 (where the latter is better than making a \textit{misleading} prediction). This also means that question selection can play a role in the final performance. We adopt this choice to keep the task set scalable in the future, as in the real world humans also have to choose what tasks they focus on, though it is small enough in our experiments (330 questions) that all models except Qwen 3.6 Plus do make predictions on all questions. 

This can however still skew our test-time adaptation experiments, where there is a significant difference in the number and questions different models update on. Thus, we now report a variant of the experiment in \Cref{fig:adaptation-memory}, where we measure test-time adaptation on the subset of questions where every model makes atleast one prediction \textit{update} after the initial prediction on the question. This leaves only $46$ questions, with results reported in \Cref{app:subset_questions_fig}.

\begin{figure}[t]
    \centering
    \begin{minipage}[t]{0.49\linewidth}
        \centering
        \includegraphics[width=\linewidth]{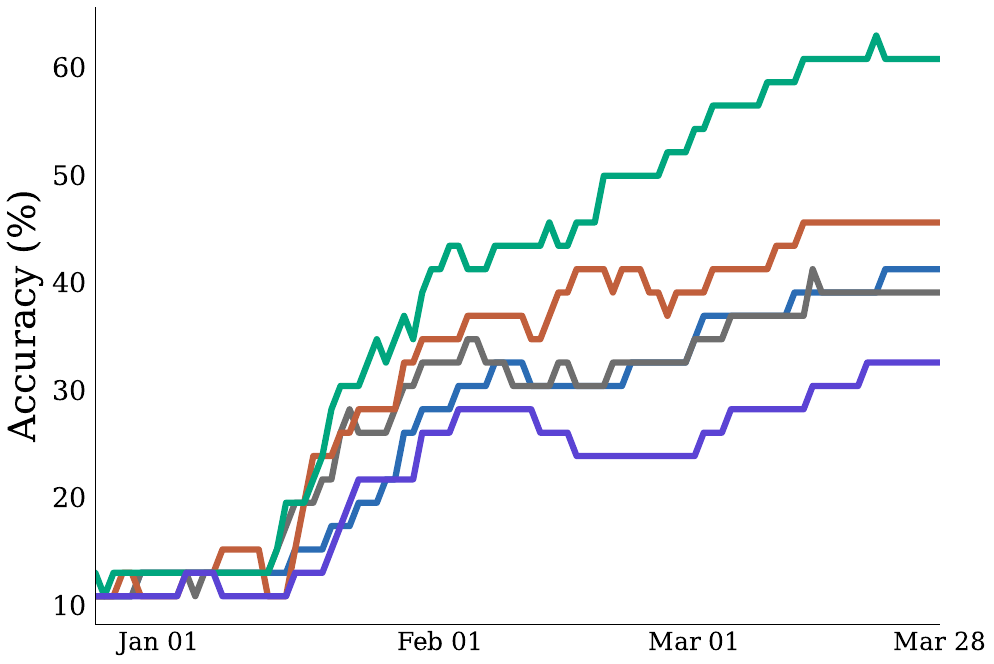}
    \end{minipage}
    \hfill
    \begin{minipage}[t]{0.49\linewidth}
        \centering
        \includegraphics[width=\linewidth]{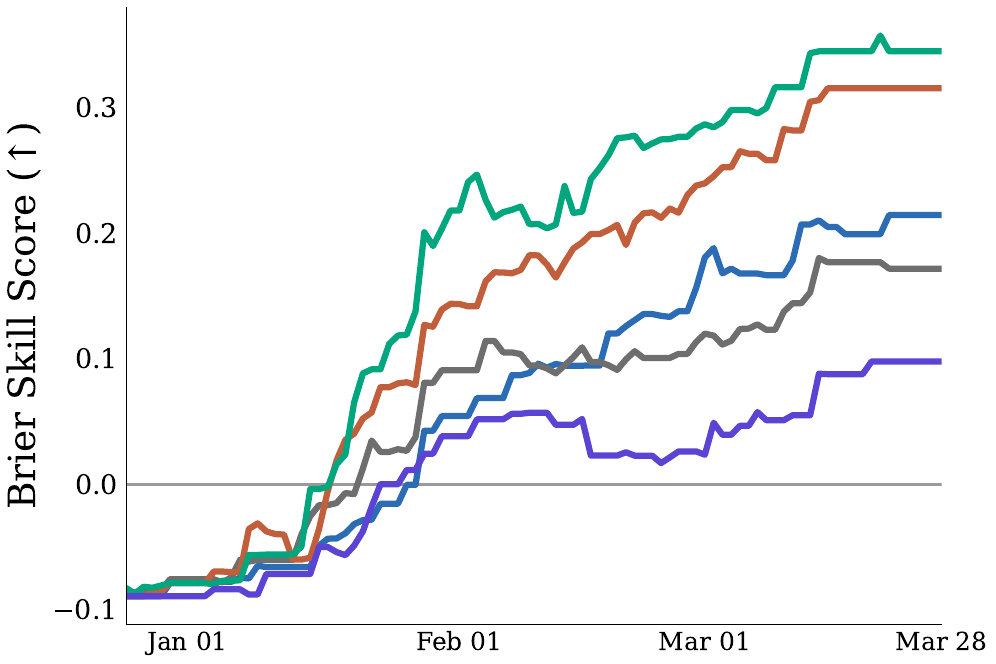}
    \end{minipage}
    \caption{\textbf{Test-time adapation on the subset where all models make prediction updates.} We once again start with Qwen 3.6 Plus predictions on all questions, and then restrict the analysis to the subset of $46$ questions where all models submit at least one forecast. Across both accuracy and brier skill score, we see consistent trends in test-time adaptation with the main paper plot shown over the full question pool, showing question selection currently does not lead to a significant change in model rankings.}
    \label{app:subset_questions_fig}
\end{figure}

We see that for both accuracy and brier skill score, rankings remain consistent with those computed over the full set of 330 questions as reported in \Cref{fig:adaptation-memory}. We do notice that the absolute accuracy and brier skill score values grow very fast on this subset where all models made updates. This matches our expectation, as if all models made an update, there likely was strong new evidence toward the ground-truth to prompt these updates.

\subsection{Search Ablations Brier skill score}

\Cref{app:searchablationbrier} shows the Brier skill score version of the search
ablation in \Cref{fig:search}. It preserves the main
qualitative conclusion that fresh daily context and agentic search matter. The
full simulation with daily additions to the news corpus is much better than freezing the corpus at
day 0. In the case of the latter, agent updates just keep reinforcing confidence in predictions without new evidence, leading to significant overconfidence and an eventual negative Brier skill score. Interestingly, making a single retrieval query also leads to highly negative brier skill score, showing the importance of iterative refinement via search for calibrated beliefs~\citep{murphy2026agenticforecastingusingsequential}. Unlike the accuracy plot in the main paper (\Cref{fig:search}), testing each question independently one day before its resolution does not lead to significant improvements in Brier skill score than sequential updates over the course of the simulation.

\begin{figure}[htbp]
    \centering
    \includegraphics[width=0.4\linewidth]{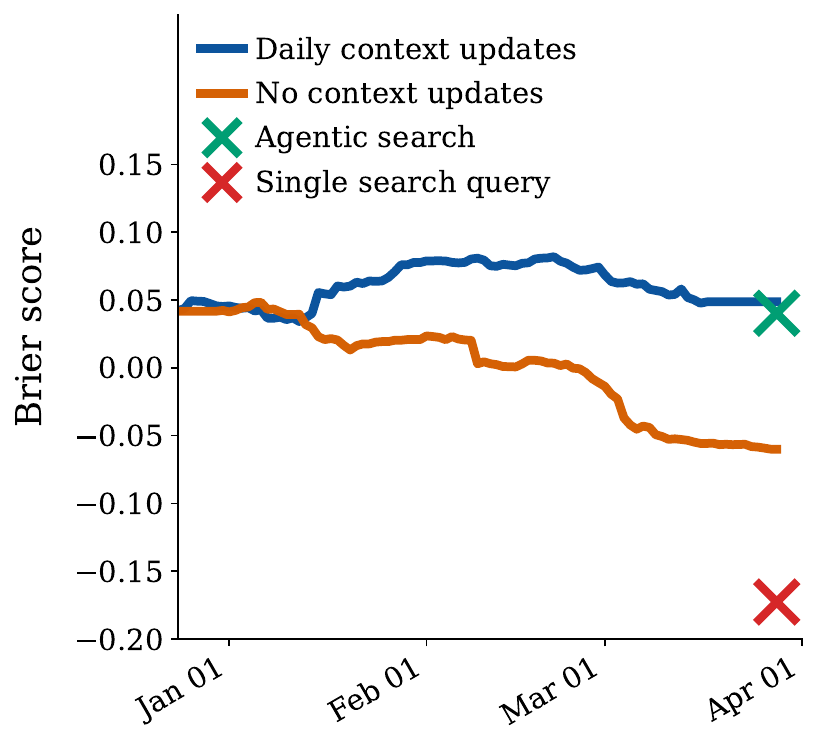}
    \caption{\textbf{Benefits from search.} We evaluate GPT 5.5 xhigh reasoning effort in four different settings to isolate the benefits of agentic search over updating context in \futuresim, this time measuring brier skill score. Consistent with the accuracy trend, we find large improvements from daily context updates (blue line) compared to when no articles beyond the first date are added during the simulation (orange line), and agentic search (green cross) outperforms just performing a single search query (red cross) one day before each question resolves. However, evaluating each question independently one day before its resolution does not improve brier skill score despite the large gain in accuracy, which can be attributed to poor distribution of probabilities over possible outcomes.}
    \label{app:searchablationbrier}
\end{figure}

\subsection{Inference Scaling Brier skill score}

\Cref{app:inferencescalingbrier} mirrors the inference-scaling accuracy result
in \Cref{fig:inference-scaling}. Increasing GPT 5.5's
reasoning effort improves Brier skill score as well as accuracy, with the
largest gains coming from moving away from no reasoning and diminishing returns
at the highest efforts. This supports the same interpretation as the main
figure: extra test-time compute helps the agent search and reason, but the
benefit plateaus for this model in \futuresim. Notably, we see highly negative Brier skill score at reasoning effort ``none''. This indicates that more tokens at test-time help the model make more calibrated predictions, which is corroborated by qualitative analysis of reasoning summaries for closed-weight, and transparent reasoning chains for open-weight models, where models spend a significant amount of tokens deciding what probability to output.

\begin{figure}[htbp]
    \centering
    \includegraphics[width=0.4\linewidth]{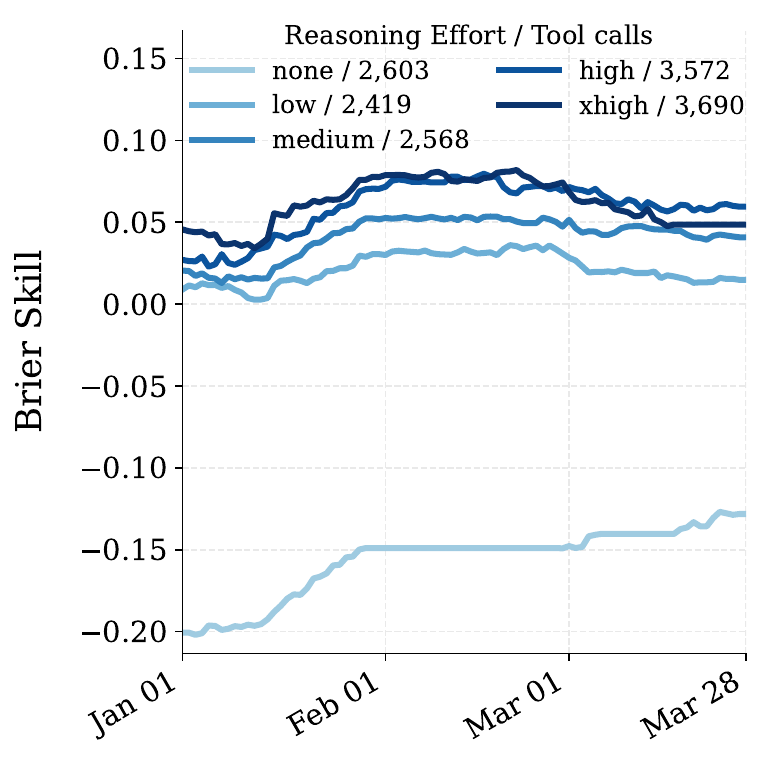}
    \caption{\textbf{Effect of scaling test-time compute on brier skill score.} We run GPT 5.5 in all five available reasoning efforts to see how additional inference compute changes brier skill score on \futuresim. We find higher reasoning effort consistently leads to better brier skill score, although the effect plateaus for this model after reasoning effort high. Notably, reasoning effort ``none'' has extremely poor brier skill score while not being as bad in accuracy, indicating the importance of reasoning for calibration~\citep{damani2026beyond}.}
    \label{app:inferencescalingbrier}
\end{figure}

\subsection{Multi-agent performance}

\label{app:multiagent-performance}

\Cref{app:multiagentperformance} reports the accuracy and Brier skill score
trajectories for the multi-agent experiment described in \Cref{sec:multiagent}.
The multi-agent runs seem to start at and maintain higher accuracy than independent single agent runs
for all three DeepSeek v3.2 agents. While deeper exploration of this phenomenon is left to future work due to cost reasons, one hypothesis for why the initial predictions (before any inter-agent interaction through the crowd aggregate) could be better is that agents were told they are competing with other agents in the environment in the multi-agent setting. 

\begin{figure}[htbp]
    \centering

    \begin{minipage}{0.49\linewidth}
        \centering
        \includegraphics[width=\linewidth]{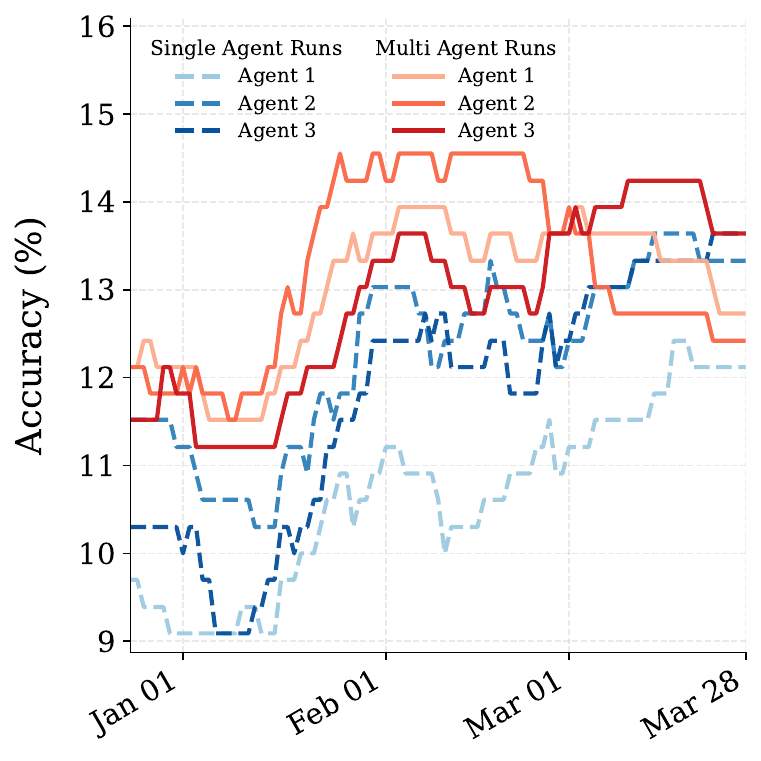}
    \end{minipage}
    \hfill
    \begin{minipage}{0.49\linewidth}
        \centering
        \includegraphics[width=\linewidth]{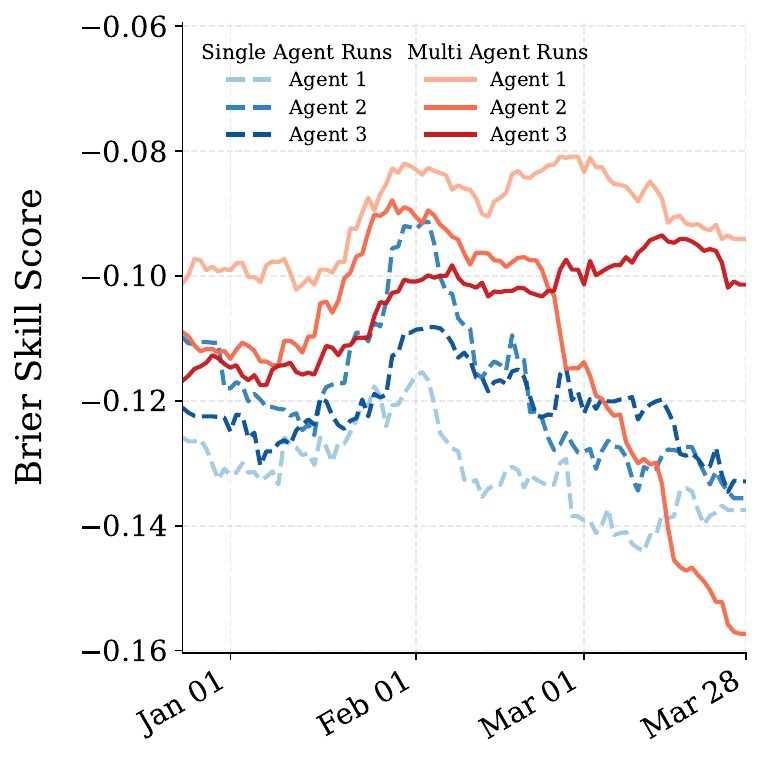}
    \end{minipage}

    \caption{\textbf{Multi-agent performance.} When we run multiple copies of DeepSeek v3.2 agents simultaneously, we see anecodtal evidence that individual agent predictions obtain slightly higher accuracy than the independent agent runs.}
    \label{app:multiagentperformance}
\end{figure}

\crefalias{section}{appendix}

\section{Prompts}
\label{app:prompts}

\subsection{Native Harness}
\label{app:prompt-native-harness}

The following template is rendered for the main forecasting runs. Runtime fields such as dates and question counts are filled in by the simulation harness.

\begin{promptframe}{}
\begin{PromptVerbatim}
You are a forecasting agent. Today is <current_date>. Your goal is to make accurate and calibrated predictions.

## UPDATE CADENCE
You have the chance to update your predictions every <timegap_days> day(s). Your workspace files (memory/, scripts, notes) persist across days -- use them to track reasoning and lessons learned. Articles are available via the search tool and in the articles/ directory. Current date: <current_date>. Next scheduled update: <next_date>.
<optional_resolution_cadence_note>

## SCORING (Brier Skill Score)
You have to output a distribution of (outcome, probability) pairs for each question you make a forecast on.
You are evaluated on the Brier Skill Score = 1 - sum_i (p_i - y_i)^2 summed over all outcomes, where:
- p_i = your probability for outcome i
- y_i = 1 if your outcome i is TRUE, 0 otherwise
- Higher is better: 1.0 = perfect, 0.0 = abstaining from guessing, negative = worse than abstaining.

Key Mechanics:
1. Accuracy + Calibration: assign calibrated probabilities that reflect true likelihoods.
2. Time-Weighted Score: forecasts made earlier matter, but updating is rewarded when new evidence arrives.
3. Prediction-Count Incentive: unanswered active questions receive zero contribution.
4. End-of-Session Metrics are shown after each session.
5. Max Outcomes: submit at most <max_outcomes_per_question> outcomes per question.
6. No Placeholders: "Unknown", "TBD", and "Other" hurt your score.

## AVAILABLE DATA
You have access to a news article database, which is updated daily through a search tool, that you can use to find evidence for your forecasts.
You can access the market.csv file (READ-ONLY) in your workspace containing <num_questions> questions (<num_active> active/unresolved, <num_resolved> resolved).

Column descriptions of the DataFrame (market.csv):
- qid, title, background, resolution_criteria, answer_type
- resolution_date, is_resolved, ground_truth
- num_predictions, options, my_prediction, my_prediction_date

Note: my_prediction contains your current forecast as a dict, or None if not yet predicted. ground_truth contains the resolved answer, or None if not yet resolved.

## TOOLS AVAILABLE FOR YOUR USE
- mcp__forecast__search_news(query, from_date?, to_date?): search the news corpus for evidence. to_date is capped at today's date.
- mcp__forecast__submit_forecasts(question_id, outcomes): submit exactly one forecast for exactly one question ID.
- mcp__forecast__next_day(): end the current session and proceed to the next one.

## Workspace:
- market.csv -- Read-only snapshot of all questions, refreshed each day.
- articles/ -- Browsable news articles organized by date as articles/YYYY/MM/DD/articles.jsonl.
- memory/ -- Persistent notes directory. Read and write freely.

You have full control over your workspace. You may create files or directories that help you perform better, such as forecasting strategies, calibration notes, scripts, or per-question research notes.

## SUBMISSION RULES
- qid must be from an active unresolved question identified from market.csv.
- Each mcp__forecast__submit_forecasts call must contain exactly one forecast for one question ID.
- You may submit again later in the same session to update that qid.
- Maximum of <max_outcomes_per_question> outcomes allowed per question.
- Outcome names must be real predicted answers.
- Never use placeholders like "Unknown", "TBD", "Other", or "N/A".
- Probabilities must sum to <= 1.0.

## Rules
- No web access is available. Use mcp__forecast__search_news and articles/ for information.
- market.csv is read-only. Do not modify it.
- You can use Bash, Read, Write, Grep, Glob, and other tools freely in your workspace.
- Your job is to maximize your time-weighted score.

Begin.
\end{PromptVerbatim}
\end{promptframe}

\subsection{Our Harness}

This prompt is used for our custom forecasting harness. It adds daily feedback, memory tools, update priorities, and action/context-budget guidance around the same forecasting task.

\begin{promptframe}{}
\begin{PromptVerbatim}
You are a forecasting agent. Today is <current_date>. Your goal is to make accurate and calibrated predictions.

<results_since_last_session>
## RESULTS SINCE YOUR LAST SESSION (<last_date> -> <current_date>)
- "<resolved_question_title>"
  Your prediction distribution: <prediction_distribution> | Truth: <ground_truth>
  Brier: <brier_skill_score> | TW-Score: <time_weighted_score>
...

## YOUR CUMULATIVE PERFORMANCE TILL TODAY
- Total Predictions: <num_predictions> (<num_resolved> resolved)
- accuracy: <accuracy>%
 accuracy = fraction of resolved questions where your top outcome matched the truth; brier skill score = mean brier skill score across resolved questions

<source_context>
<source_specific_rules>

## UPDATE CADENCE
You can make updates every <timegap_days> day(s). Your context is cleared after every session and your memory (along with past predictions) is the only information retained between sessions. <new_articles_text><last_update_text>Current date: <current_date>. Next scheduled update: <next_date>.
<tomorrow_resolution_reminder>

IMPORTANT: You have predictions on <predicted_count> out of <active_count> active questions.
Tip: You can check your existing predictions by reading market.csv and filtering rows where `my_prediction` is not null.

UPDATE RULES:
- Do NOT re-predict questions from scratch unless you find specific new evidence.
- Only update a prediction if you find SPECIFIC NEW evidence (news, data) that updates your view.

PRIORITIES FOR UPDATES:
1. Questions resolving the next day (filter `market.csv` by `resolution_date` == tomorrow) -- make sure your prediction is up-to-date before calling next_day.
2. Questions without predictions (if any)
3. Questions where today's news search reveals new information
4. Questions approaching resolution date that you haven't checked recently
5. Skip questions where there is no new evidence

## YOUR MEMORY
Current meta-insights with their indices:
<meta_insight_index>

`mem_df` holds your per-question notes (reasoning, evidence, calibration) -- 1 row per question.
Columns: qid (str), question (str), last_updated (str), memory (str), category (str)
<prior_memory_location_or_empty_memory_note>

Inspect `mem_df` by reading <prior_mem_csv>. Edit per-question notes with `mcp__forecast__mem_add`, `mcp__forecast__mem_update`, `mcp__forecast__mem_delete`.
Manage meta-insights with `mcp__forecast__memory_retrieve` (using the indices), `mcp__forecast__memory_new`, `mcp__forecast__memory_update`, `mcp__forecast__memory_delete`.
Caps: meta-insights <= 500 entries; per-question `mem_df` memory <= 1000 chars per row.

## SCORING (Brier Skill Score)
You have to output a distribution of (outcome, probability) pairs for each question you make a forecast on.
You are evaluated on the Brier Skill Score = 1 - sum_i (p_i - y_i)^2 summed over all outcomes, where:
- p_i = your probability for outcome i
- y_i = 1 if your outcome i is TRUE, 0 otherwise
- Higher is better: 1.0 = perfect, 0.0 = abstaining from guessing, negative = worse than abstaining.

Key Mechanics:
1. Accuracy + Calibration: assign calibrated probabilities that reflect true likelihoods.
2. Time-Weighted Score: forecasts made earlier matter, but updating is rewarded when new evidence arrives.
3. Prediction-Count Incentive: unanswered active questions receive zero contribution.
4. End-of-Session Metrics are shown after each session.
5. Max Outcomes: submit at most <max_outcomes_per_question> outcomes per question.
6. No Placeholders: "Unknown", "TBD", and "Other" hurt your score.

## AVAILABLE DATA
You have access to a news article database which is updated daily through a search tool, that you can use to find evidence for your forecasts.
You also have access to a read-only `market.csv` file in your workspace with <num_questions> questions (<num_active> active/unresolved, <num_resolved> resolved).

Column descriptions of market.csv:
- qid, title, background, resolution_criteria, answer_type
- resolution_date, is_resolved, ground_truth
- num_predictions, options, my_prediction, my_prediction_date

Note: `my_prediction` contains your current forecast as a dict, or None if not yet predicted. `ground_truth` contains the resolved answer, or None if not yet resolved.

## TOOLS AVAILABLE FOR YOUR USE
- `mcp__forecast__search_news(query, from_date?, to_date?)`: search the news corpus for evidence. `to_date` is capped at today's date.
- `mcp__forecast__memory_retrieve` / `mcp__forecast__memory_new` / `mcp__forecast__memory_update` / `mcp__forecast__memory_delete`: manage meta-insight entries.
- `mcp__forecast__mem_add` / `mcp__forecast__mem_update` / `mcp__forecast__mem_delete`: manage question-specific notes in `mem_df`.
- `mcp__forecast__submit_forecasts(question_id, outcomes)`: submit exactly one forecast for exactly one question ID (`qid`).
- `mcp__forecast__next_day()`: first call enters memory-update mode; call it a second time after your memory updates to actually proceed to the next day.

You also have access to native tools Bash/Read/Grep etc. -- use them to read market.csv and browse articles/. The MCP server persists today's `mem.csv` + `meta.yaml` automatically on `mcp__forecast__next_day`; do not write under `memory/` yourself.

## Workspace:
- market.csv -- Read-only snapshot of all questions, refreshed each day.
- articles/ -- Browsable news articles organized by date as articles/YYYY/MM/DD/articles.jsonl.
- predictions/ -- Read-only record of your past submissions, one file per day as `predictions/YYYY-MM-DD.json`.
- memory/ -- Read-only persisted notes (`memory/YYYY-MM-DD/{mem.csv, meta.yaml}`). Read prior days' files for context; edit memory only through the MCP memory tools.

## INTERACTION FLOW
You have <max_actions> actions per day. Each query, search, memory operation, or submission uses 1 action.
You have a context budget of <max_total_tokens> tokens for this session. This tracks both the input prompt and cumulative output tokens spent so far.
Keep at least <submit_reserve_tokens> tokens free for a final submit. Force-submit once the remaining context budget is at or below <force_submit_threshold_tokens>.
If both budgets are configured, both are enforced and the session ends when either one is exhausted.

You can interleave reads, searches, memory operations, and submissions as needed. Consider reading `memory/<last_date>/mem.csv` early to recall prior reasoning and identify which questions need attention.

## MEMORY WORKFLOW
Treat memory as two layers:
- `mem_df`: question-specific reasoning, evidence, and calibration notes for a single QID.
- meta-insights: reusable cross-question patterns, lessons, and calibration rules that should help on future days.

Before calling `mcp__forecast__next_day()`:
1. Update `mem_df` for questions you researched or forecasted today using `mcp__forecast__mem_add` / `mcp__forecast__mem_update`.
2. If today's work revealed a reusable pattern, lesson, or calibration rule, promote it into a meta-insight.
3. If a prior meta-insight is stale or contradicted, revise or delete it.

Do not use meta-insights as a daily activity log. If you learned nothing reusable today, it is fine to skip meta-insight writes.

## SUBMISSION RULES
- qid must be from an active (`is_resolved=False`) question you identified from market.csv.
- Each `mcp__forecast__submit_forecasts` call must contain exactly one forecast for one question ID.
- You may submit again later in the same session to update that qid.
- Maximum of <max_outcomes_per_question> outcomes allowed per question.
- Outcome names must be real predicted answers.
- Never use placeholders like "Unknown", "TBD", "Other", or "N/A".
- Probabilities must sum to <= 1.0.

Tip: After submitting a forecast, consider saving your reasoning and key evidence for that QID using `mcp__forecast__mem_add`/`mcp__forecast__mem_update`.

---
Budget at start:
Actions remaining: <max_actions>
Context tokens remaining: <max_total_tokens> (estimated current context 0 / <max_total_tokens>)
Note: current message tokens are not accounted for yet.
Begin.
\end{PromptVerbatim}
\end{promptframe}

\subsection{Multi-Agent}
\label{app:prompt-multi-agent}

When the simulation is run with multiple agents, the daily prompt is modified as follows. These changes are inserted in addition to the base forecasting prompt.

\begin{promptframe}{}
\begin{PromptVerbatim}
## MULTI-AGENT SETTING
You are competing against <N-1> other forecasting agents on the same set of questions.
You each predict independently on every wakeup day. After each day, your predictions
are averaged with the others' into a market aggregate (the `market_aggregate`
column), which you can see starting the following day.
You are scored relative to your competitors: to earn a positive time-weighted peer
score, your predictions need to be more accurate than the group average.

...

## SCORING (Time-Weighted Peer Score (Brier-Skill Based))

...

- **Time-Weighted Peer Score (TW-Peer)**: On each day a prediction is held, your
Brier Skill Score is compared to the mean of all other agents' scores for the same
question. These daily differences are summed over the lifetime of the prediction.
A positive TW-Peer indicates predictions that were consistently more accurate than
the group average.

...

**Relative Performance (multi-agent)**: Final scoring is relative, so you have to
outperform the market aggregate to gain positive peer score.

...

Note: `market_aggregate` and `my_prediction` columns contain Python dicts
(or None). You can access them directly, e.g.
`row['market_aggregate']['outcome_name']`.

- `market_aggregate`: the mean probability distribution across all agents' latest
predictions from the previous day. `None` on the first day.
- `my_prediction`: your own latest forecast, or None if you have not predicted this
question yet.
- `num_predictions`: total number of prediction submissions made on this question
across all agents and all days.
\end{PromptVerbatim}
\end{promptframe}

\subsection{Earliest-Date Repair Prompt}
\label{app:prompt-late-resolve}

The following prompt is used to figure out potentially already-resolved questions and infer the earliest date on which the answer could have been determined from public information.

\begin{promptframe}{}
\begin{PromptVerbatim}
You are provided with a forecasting question (which might be from the past). You have to find not only the answer to the question, but also the earliest date on which the answer to the question could be inferred. Be smart in your inference. The question might contain extra details about the situation/event being asked but I want you to find out the earliest date by which the answer could have been figured out (even without extra details). For example, if you had seen the question 6 months back, could you have figured out the answer confidently.

Question Title: <question_title>
Question Background: <background>
Expected Answer Type: <answer_type>

Think step by step about the information provided and put the answer to the question in <answer> </answer> tags and the earliest date on which the answer to the question could be inferred with certainty in <date> </date> tags. The date should be in the format YYYY-MM-DD.
Once you find the answer, please make sure to find THE EARLIEST DATE the answer could have been guessed. Try to search as much as possible across sites/pages to find out when was the earliest time the answer to the question was basically known/determined (or could have been inferred confidently from public knowledge).
\end{PromptVerbatim}
\end{promptframe}

\subsection{Answer Matching Evaluation Prompts}
\label{app:answer-matching-prompts}

This prompt is used at resolution time to decide whether a submitted free-form outcome matches the ground truth answer for scoring.

\begin{promptframe}{Resolved-answer equivalence prompt}
\begin{PromptVerbatim}
You are an objective judge of forecasting predictions.

Question: "<question_title>"
Predicted outcome: "<predicted_outcome>"
Ground truth (actual answer): "<ground_truth>"

Does the predicted outcome match the ground truth? Rules:
- YES if predictions are semantically equivalent (same meaning, different wording)
- YES if predicted outcome is MORE SPECIFIC than ground truth (e.g. "David Raya" matches "Raya")
- NO if predicted outcome contains generic text like "Unknown" or "Answer 1" or "Option 1"
- NO if predicted outcome is VAGUER/MORE GENERAL than ground truth (e.g., "a goalkeeper" does NOT match "David Raya")
- NO if they refer to different things

Essentially, you have to grade whether the forecaster correctly predicted the ground truth answer for the question.
Answer strictly "Yes" or "No".
\end{PromptVerbatim}
\end{promptframe}

This prompt is used before scoring to cluster semantically equivalent free-form predictions, so differently worded outcomes are treated as the same candidate answer.

\begin{promptframe}{Prediction-clustering match prompt}
\begin{PromptVerbatim}
You are an objective judge of forecasting predictions.

Question: "<question_title>"

New prediction: "<candidate_prediction>"

Existing predictions:
1. <existing_prediction_1>
2. <existing_prediction_2>
...
N. <existing_prediction_N>

Does the new prediction match any of the existing predictions semantically?
- Match if they mean the same thing or if new prediction is more specific
- Do NOT match if new prediction is vaguer/more general

If yes, respond with ONLY the number (e.g., "1" or "3").
If no match exists, respond with "None".

Answer:
\end{PromptVerbatim}
\end{promptframe}

\end{document}